\pgfplotsset{compat=1.14}
\newtheorem{proposition}{Proposition}
\newtheorem{remark}{Remark}
\theoremstyle{definition}
\newtheorem{definition}{Definition}
\newtheorem{problem}{Problem}
\tikzset{
    block/.style = {draw, rectangle,
        minimum height=1cm,
        minimum width=2cm},
    input/.style = {coordinate,node distance=1cm},
    output/.style = {coordinate,node distance=4cm},
    arrow/.style={draw, -latex,node distance=2cm},
    pinstyle/.style = {pin edge={latex-, black,node distance=2cm}},
    sum/.style = {draw, circle, node distance=1cm},
}
\title{\LARGE \bf Intermittent Connectivity for Exploration in Communication-Constrained Multi-Agent Systems}
 \author{Filip Klaesson, Petter Nilsson, Aaron D. Ames and Richard M. Murray
 \thanks{The authors are with California Institute of Technology, Pasadena, CA.}
 }
\begin{document}

\maketitle
\thispagestyle{empty}
\pagestyle{empty}

\begin{abstract}
    Motivated by exploration of communication-constrained underground environments using robot teams, we study the problem of planning for intermittent connectivity in multi-agent systems. We propose a novel concept of information-consistency to handle situations where the plan is not initially known by all agents, and suggest an integer linear program for synthesizing information-consistent plans that also achieve auxiliary goals. Furthermore, inspired by network flow problems we propose a novel way to pose connectivity constraints that scales much better than previous methods. In the second part of the paper we apply these results in an exploration setting, and propose a clustering method that separates a large exploration problem into smaller problems that can be solved independently. We demonstrate how the resulting exploration algorithm is able to coordinate a team of ten agents to explore a large environment.
\end{abstract}

\section{Introduction}

In many multi-agent applications such as mapping, frontier exploration, and search-and-rescue missions, communication between agents is critical to gain awareness of the situation. Significant effort has been devoted to study connectivity in continuous models  \cite{Zavlanos2007,DeGennaro2008,mesbahi2010graph}. Although such models are closer to the actual dynamics of robots, solution techniques become difficult to apply in complex geometries. This paper considers an alternative approach of planning on a discrete graph model that abstracts away low-level dynamics but, in our opinion, retains the core elements of the connectivity problem.

Various solutions have been proposed to maintain connectivity between agents in environments like tunnel and cave systems that suffer from severe communication constraints \cite{Jensen2018, Hollinger2012, Liu2017}. The fundamental difference between these methods is the type of connectivity constraint enforced between agents. 
The situational awareness of agents increases with the strictness of the constraint while a more relaxed constraint allows them to perform individual tasks to a higher degree. 

The strictest connectivity constraint is the \emph{continuous} connectivity constraint which demands that all agents are fully connected at all times \cite{ROOKER2007435}. This implies that two arbitrary agents in the network can share data with each other at any time. Although this results in global awareness, it can degrade the objective performance significantly. \emph{Recurrent} connectivity \cite{Banfi2018} is a relaxation of continuous connectivity that allows agents to occasionally disconnect and instead enforces global connectivity only at a specific time instant (such as the final time), which allows more flexibility at the cost of awareness.

However, connectivity constraints like continuous and recurrent connectivity that occasionally enforce global connectivity do not perform well in exploration tasks if the environment is large relative to the number of agents. For instance, the exploration setting studied in \cite{Banfi2018} incorporates a fixed base station that needs to be updated of the exploration progress. In this case the recurrent connectivity constraint implies that agents should be connected to the base station even when they reach frontiers. If a frontier is further away than the reach of the longest fully connected configuration it is therefore not feasible to explore it, and even if the frontier is within reach recurrent connectivity may be inefficient since a large number of agents may be required to establish connectivity and therefore can not partake in exploration. 

Motivated by exploration settings where the environment is large in relation to the number of robots we propose a novel \emph{intermittent} connectivity constraint that formalizes data distribution behaviors. 
Intermittent connectivity does not ever impose simultaneous global connectivity; it only requires a way to pass data between certain agents within some time horizon. This allows agents to transport data and distribute it between each other in a collaborative manner. A stricter version of intermittent connectivity was formulated via linear temporal logic in \cite{Kantaros2017}, together with a way to generate trajectories. Our formulation is more permissive as it does not pre-suppose a division of agents into teams, and does not require all members of a team to meet repeatedly at the same location. Instead we allow for targeted information distribution which leaves more flexibility to perform actual exploration tasks. 
We also consider the case when the plan itself is part of the data that needs to be distributed, i.e., agents that have not yet received this data are not aware of the plan and can therefore not be expected to act in accordance with it. We handle this situation by designating a \emph{master} agent with initial knowledge of the plan, and pose \emph{master constraints} that ensure that the plan is consistent with information distribution constraints.

As previous work \cite{Banfi2018,Rossi2018ReviewOM,Kara2006IntegerLP,Earl2002ModelingAC,CTL18} we plan multi-agent trajectories via an Integer Linear Program (ILP). Connectivity problems have also been solved with the (mixed) ILP approach: \cite{Chatzipanagiotis2016} proposed a MILP that keeps static source states connected over time. Inspired by ideas from network flows we introduce new linear constraints that guarantee our relaxed intermittent connectivity criterion and show that this formulation has better scalability properties than those found in previous work; the number of constraints scales linearly with the graph size instead of combinatorially. However, since ILPs are NP-hard even the improved flow formulation encounters scalability issues for large graphs. To mitigate this issue we propose a clustering method tailored to exploration problems that decomposes a large problem into smaller problems that can be solved more efficiently.


\section{Preliminaries and Problem Formulation}
\label{sec:prelims}

In this section we first present preliminaries in order to state the intermittent connectivity problem that we seek to solve.

\subsection{Environment model}

When constructing the environment model the possibility of communication between locations needs to be considered. Determining whether communication is possible between two locations could be done via models, data, or both. In order to not restrict the framework we do not specify a communication condition and instead use an abstract mobility-communication-network as a model of the environment.

\begin{definition} 
A mobility-communication-network $\EuScript{N}$ is defined as the tuple $\EuScript{N}=(S,\rightarrow,\rightsquigarrow)$:
\begin{itemize}
    \item $S$: finite set of states $s$,
    \item $\rightarrow \subset S \times S$: set of directed mobility edges $(s,s')$,
    \item $\rightsquigarrow  \subset S \times S$: set of directed communication edges $(s,s')$.
\end{itemize}
\label{def:env}
\end{definition}

The states $s\in S$ correspond to locations in the environment and a mobility edge $(s,s')\in\rightarrow$ denotes a directed traversable path from $s$ to $s'$. The existence of a communication edge $(s,s')\in\rightsquigarrow$ certifies that an agent located in $s$ can send data to an agent located in $s'$. We assume that agents in the same state are able to share information with each other. It is also assumed that $\rightsquigarrow$ does not contain false positives.

For optimization purposes we introduce edge weights $C$ and $\widetilde C$ that associate weights with the edges in $\rightarrow$ and $\rightsquigarrow$, respectively. Likewise, state rewards $\mathfrak R$ are used to associate a reward to reaching certain locations. We seek to find finite-time trajectories and denote the time horizon by T.

A key component to formalize the intermittent connectivity constraint is the time-extended version of the mobility-communication network, which is built by stacking copies of the network and associating each layer with a time step. The mobility edges in each layer are rewired across time steps, as illustrated in Fig \ref{fig:time_extended-graph}. In the time-extended graph agents therefore traverse over time steps but communicate within time steps.

\begin{figure}[t!]
    \centering
    \resizebox{!}{0.7\columnwidth}{%
            \begin{tikzpicture}[>=stealth,node distance=2.7cm,auto]
            \tikzstyle{agent}=[circle,thick,draw=blue!75,fill=blue!20,minimum size=8mm]
        
            \node[state]                    (s0)                    {$s_0$};
            \node[state]                    (s1) [above of = s0]      {$s_1$};
            \node[state]                    (s2) [above of = s1]       {$s_2$};
            \node[state]                    (s3) [above of = s2]      {$s_3$};
                     
            \draw[dashed,->] (s0) to [out=60,in=-60] (s1);
            \draw[dashed,->] (s1) to [out=60,in=-60] (s2);
            \draw[dashed,->] (s2) to [out=60,in=-60] (s3);
            \draw[dashed,<-] (s0) to [out=120,in=-120] (s1);
            \draw[dashed,<-] (s1) to [out=120,in=-120] (s2);
            \draw[dashed,<-] (s2) to [out=120,in=-120] (s3);
            
            \path[->]
                (s0) edge   node {}     (s1)
                (s1) edge   node {}     (s0)
                (s1) edge   node {}     (s2)
                (s2) edge   node {}     (s1)
                (s2) edge   node {}     (s3)
                (s3) edge   node {}     (s2)
                (s0) edge [loop right] node {} (s0)
                (s1) edge [loop right] node {} (s1)
                (s2) edge [loop right] node {} (s2)
                (s3) edge [loop right] node {} (s3);
                
            \node at ([xshift=5mm]s0) {\includegraphics[width=0.8cm]{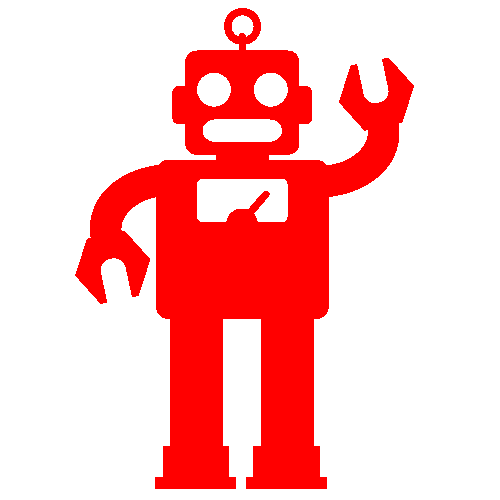}};
            \node at ([xshift=5mm]s1){\includegraphics[width=0.8cm]{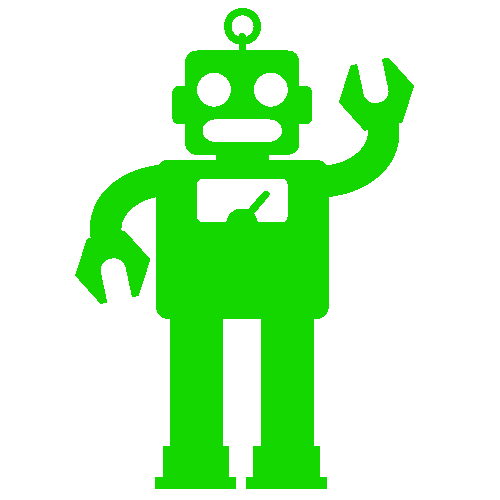}};
            \node at ([xshift=5mm]s3){\includegraphics[width=0.8cm]{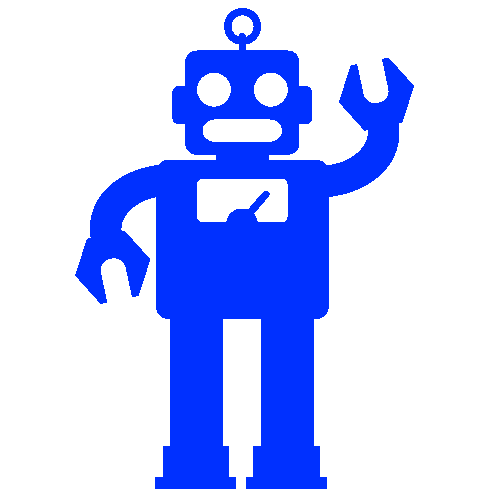}};

            \node[] at ([yshift=-9mm]s0) { };

            \node[below of=s0, yshift=12mm] {\large $\EuScript{N}$};
            \end{tikzpicture}
    }
    ~ \hspace{5mm}
    \resizebox{!}{0.7\columnwidth}{%
            \begin{tikzpicture}[>=stealth,node distance=2.7cm,auto]
            \tikzstyle{agent}=[circle,thick,draw=blue!75,fill=blue!20,minimum size=5mm]
        
            \node[state]                    (s0t0)                    {$s_0t_0$};
            \node[state]                    (s1t0) [above of = s0t0]      {$s_1t_0$};
            \node[state]                    (s2t0) [above of = s1t0]       {$s_2t_0$};
            \node[state]                    (s3t0) [above of = s2t0]      {$s_3t_0$};
                     
            \draw[line width=1mm, red, dashed,->] (s0t0) to [out=60,in=-60] (s1t0);
            \draw[dashed,->] (s1t0) to [out=60,in=-60] (s2t0);
            \draw[dashed,->] (s2t0) to [out=60,in=-60] (s3t0);
            \draw[dashed,<-] (s0t0) to [out=120,in=-120] (s1t0);
            \draw[dashed,<-] (s1t0) to [out=120,in=-120] (s2t0);
            \draw[dashed,<-] (s2t0) to [out=120,in=-120] (s3t0);
        
            \node[state]                    (s0t1) [right of = s0t0]      {$s_0t_1$};
            \node[state]                    (s1t1) [above of = s0t1]      {$s_1t_1$};
            \node[state]                    (s2t1) [above of = s1t1]       {$s_2t_1$};
            \node[state]                    (s3t1) [above of = s2t1]      {$s_3t_1$};

            \draw[dashed,->] (s0t1) to [out=60,in=-60] (s1t1);
            \draw[dashed,->] (s1t1) to [out=60,in=-60] (s2t1);
            \draw[line width=1mm, red, dashed, dashed,->] (s2t1) to [out=60,in=-60] (s3t1);
            \draw[dashed,<-] (s0t1) to [out=120,in=-120] (s1t1);
            \draw[dashed,<-] (s1t1) to [out=120,in=-120] (s2t1);
            \draw[line width=1mm, blue, dashed, dashed,<-] (s2t1) to [out=120,in=-120] (s3t1);

            \node[state]                    (s0t2) [right of = s0t1]    {$s_0t_2$};
            \node[state]                    (s1t2) [above of = s0t2]    {$s_1t_2$};
            \node[state]                    (s2t2) [above of = s1t2]    {$s_2t_2$};
            \node[state]                    (s3t2) [above of = s2t2]     {$s_3t_2$};
                     
            \draw[dashed,->] (s0t2) to [out=60,in=-60] (s1t2);
            \draw[dashed,->] (s1t2) to [out=60,in=-60] (s2t2);
            \draw[dashed,->] (s2t2) to [out=60,in=-60] (s3t2);
            \draw[line width=1mm, blue, dashed, dashed,<-] (s0t2) to [out=120,in=-120] (s1t2);
            \draw[dashed,<-] (s1t2) to [out=120,in=-120] (s2t2);
            \draw[dashed,<-] (s2t2) to [out=120,in=-120] (s3t2);

            \path[->]
                (s0t0) edge   node {}     (s0t1)
                (s0t0) edge   node {}     (s1t1)
                (s1t0) edge   node {}     (s0t1)
                (s1t0) edge   node {}     (s1t1)
                (s1t0) edge   node {}     (s2t1)
                (s2t0) edge   node {}     (s1t1)
                (s2t0) edge   node {}     (s2t1)
                (s2t0) edge   node {}     (s3t1)
                (s2t1) edge  node {}     (s1t2)
                (s3t0) edge   node {}     (s2t1)
                (s3t0) edge   node {}     (s3t1)
                (s0t1) edge   node {}     (s0t2)
                (s0t1) edge   node {}     (s1t2)
                (s1t1) edge   node {}     (s0t2)
                (s1t1) edge   node {}     (s1t2)
                (s1t1) edge   node {}     (s2t2)
                (s2t1) edge   node {}     (s1t2)
                (s2t1) edge   node {}     (s2t2)
                (s2t1) edge   node {}     (s3t2)
                (s3t1) edge   node {}     (s2t2)
                (s3t1) edge   node {}     (s3t2);
                
            \path[line width=0.5mm, red, ->]
                (s0t0) edge  node {}     (s0t1)
                (s0t1) edge  node {}     (s0t2);
                
            \path[line width=0.5mm, green, ->]
                (s1t0) edge  node {}     (s2t1)
                (s2t1) edge  node {}     (s1t2);
                
            \path[line width=0.5mm, blue, ->]
                (s3t0) edge  node {}     (s3t1)
                (s3t1) edge  node {}     (s3t2);

            \node at ([xshift=5mm]s0t0) {\includegraphics[width=0.8cm]{figures/red.png}};
            \node at ([xshift=5mm]s1t0){\includegraphics[width=0.8cm]{figures/green.png}};
            \node at ([xshift=5mm]s3t0){\includegraphics[width=0.8cm]{figures/blue.png}};
    
            \node at ([xshift=5mm]s0t1) {\includegraphics[width=0.8cm]{figures/red.png}};
            \node at ([xshift=5mm]s2t1){\includegraphics[width=0.8cm]{figures/green.png}};
            \node at ([xshift=5mm]s3t1){\includegraphics[width=0.8cm]{figures/blue.png}};
    
            \node at ([xshift=5mm]s0t2) {\includegraphics[width=0.8cm]{figures/red.png}};
            \node at ([xshift=5mm]s1t2){\includegraphics[width=0.8cm]{figures/green.png}};
            \node at ([xshift=5mm]s3t2){\includegraphics[width=0.8cm]{figures/blue.png}};
    
            \node at ([yshift=-8mm]s0t0) {t = 0};
            \node at ([yshift=-8mm]s0t1) {t = 1};
            \node at ([yshift=-8mm]s0t2) {t = 2};

            \node[below of=s0t1, yshift=12mm] {\large Time-extended graph of $\EuScript{N}$};
            
            \end{tikzpicture}
    }
\vspace{-2mm}
\caption{Mobility-communication-network $\EuScript{N}$ and the time-extended graph of $\EuScript{N}$ for $T = 2$. Mobility edges are solid and communication edges are dashed. A solution for the problem of sharing data between the blue and red agents are shown in the time-extended graph. The colors of the communication edges indicate the origin agent of the data that is transmitted.}
\label{fig:time_extended-graph}
\vspace{-3mm}
\end{figure}

We introduce some simplifying notation. Given a mobility-communication-network $\EuScript{N}=(S,\rightarrow,\rightsquigarrow)$, denote the set of mobility and communication predecessors of state $s$ as $C_{\rightarrow}^-(s) = \{s'\mid (s',s) \in \rightarrow\}$ and $C_{\rightsquigarrow}^-(v) = \{s'\mid (s',s) \in \rightsquigarrow\}$. Likewise, denote the set of mobility and communication successor of state $s$ as $C_{\rightarrow}^+(s) = \{s'\mid (s,s') \in \rightarrow\}$ and $C_{\rightsquigarrow}^+(v) = \{s'\mid (s,s') \in \rightsquigarrow\}$. It's also useful to look at both mobility and communication predecessors and successors; denote the set of all predecessors by $C^-(s) = C_{\rightarrow}^-(s) \cup C_{\rightsquigarrow}^-(s)$ and the set of all successors by $C^+(s) = C_{\rightarrow}^+(s) \cup C_{\rightsquigarrow}^+(s)$.

\subsection{Intermittent connectivity}

An intermittent connection allows the agents to disconnect, move around, and establish connections to other agents in the network. Two agents $1$ and $2$ can therefore share data with each other without ever being directly connected if other agents set up a sequence of intermittent connections such that data is transferred from $1$ to $2$, and vice versa. A simple example is shown in Fig. \ref{fig:time_extended-graph} where the red and blue agents share information with each other using a sequence of intermittent connections via the green agent. We seek to formalize this type of intermittent connectivity and start by defining a \emph{mobility path} which encodes an agent trajectory.

\begin{definition}
Given a mobility-communication-network $\EuScript{N}=(S,\rightarrow,\rightsquigarrow)$, a \emph{mobility path} is a sequence $\pi$ : $s_0s_1s_2 \ldots s_T$ of states $s\in S$ such that $(s_t,s_{t+1})\in\rightarrow$. 
\end{definition}

We use a superscript $r$ to specify the mobility path for agent $r$ by $\pi^r = s_0^rs_1^rs_2^r \ldots s_T^r$. Information can be transferred spatially via two mechanisms: \textit{(i)} agents can traverse mobility edges and carry data with them, and \textit{(ii)} agents can transmit information to other agents over communication edges. These two types of information transfer guides us to the notion of an \emph{information path}.

\begin{definition}
Given a mobility-communication network $\EuScript{N}=(S,\rightarrow,\rightsquigarrow)$, an \emph{information path} is a sequence $\bar{\pi}$ : $\mathfrak s_0^0 \mathfrak s_0^1 \ldots \mathfrak s_0^{n_0} \mathfrak s_1^0 \mathfrak s_1^1 \ldots \mathfrak s_1^{n_1} \ldots \mathfrak s_T^{n_T}$ of states $\mathfrak s_t^i \in S$ such that $(\mathfrak s_t^{n_t},\mathfrak s_{t+1}^0)\in\rightarrow$ and $(\mathfrak s_t^i,\mathfrak s_t^{i+1})\in\rightsquigarrow$. In addition, an \emph{agent-to-agent information path} $\bar{\pi}^{i,j}$ is an information path that begins in the initial state of agent $i$ and ends in the final state of agent $j$, i.e., $\mathfrak s_0^0 = s_0^i$ and $\mathfrak s_T^{n_T} = s^j_T$.
\end{definition}

\begin{figure}[t!]
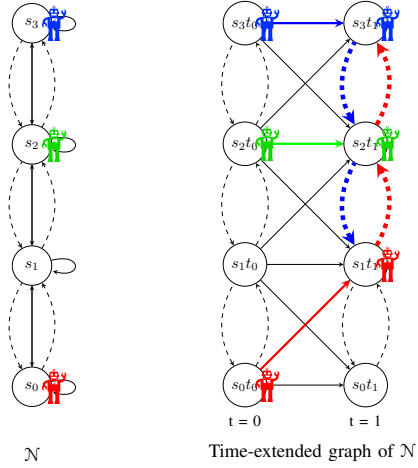

    \centering
    \resizebox{!}{0.7\columnwidth}{%
            \begin{tikzpicture}[>=stealth,node distance=2.7cm,auto]
            \tikzstyle{agent}=[circle,thick,draw=blue!75,fill=blue!20,minimum size=8mm]
        
            \node[state]                    (s0)                    {$s_0$};
            \node[state]                    (s1) [above of = s0]      {$s_1$};
            \node[state]                    (s2) [above of = s1]       {$s_2$};
            \node[state]                    (s3) [above of = s2]      {$s_3$};
                     
            \draw[dashed,->] (s0) to [out=60,in=-60] (s1);
            \draw[dashed,->] (s1) to [out=60,in=-60] (s2);
            \draw[dashed,->] (s2) to [out=60,in=-60] (s3);
            \draw[dashed,<-] (s0) to [out=120,in=-120] (s1);
            \draw[dashed,<-] (s1) to [out=120,in=-120] (s2);
            \draw[dashed,<-] (s2) to [out=120,in=-120] (s3);
            
            \path[->]
                (s0) edge   node {}     (s1)
                (s1) edge   node {}     (s0)
                (s1) edge   node {}     (s2)
                (s2) edge   node {}     (s1)
                (s2) edge   node {}     (s3)
                (s3) edge   node {}     (s2)
                (s0) edge [loop right] node {} (s0)
                (s1) edge [loop right] node {} (s1)
                (s2) edge [loop right] node {} (s2)
                (s3) edge [loop right] node {} (s3);
                
            \node at ([xshift=5mm]s0) {\includegraphics[width=0.8cm]{figures/red.png}};
            \node at ([xshift=5mm]s2){\includegraphics[width=0.8cm]{figures/green.png}};
            \node at ([xshift=5mm]s3){\includegraphics[width=0.8cm]{figures/blue.png}};
            
            \node at ([yshift=-8mm]s0) {};
             
            \node[below of=s0, yshift=12mm] {\large $\EuScript{N}$};

            \end{tikzpicture}
    }
    ~ \hspace{1cm}
    \resizebox{!}{0.7\columnwidth}{%
            \begin{tikzpicture}[>=stealth,node distance=2.7cm,auto]
            \tikzstyle{agent}=[circle,thick,draw=blue!75,fill=blue!20,minimum size=5mm]
        
            \node[state]                    (s0t0)                    {$s_0t_0$};
            \node[state]                    (s1t0) [above of = s0t0]      {$s_1t_0$};
            \node[state]                    (s2t0) [above of = s1t0]       {$s_2t_0$};
            \node[state]                    (s3t0) [above of = s2t0]      {$s_3t_0$};
                     
            \draw[dashed,->] (s0t0) to [out=60,in=-60] (s1t0);
            \draw[dashed,->] (s1t0) to [out=60,in=-60] (s2t0);
            \draw[dashed,->] (s2t0) to [out=60,in=-60] (s3t0);
            \draw[dashed,<-] (s0t0) to [out=120,in=-120] (s1t0);
            \draw[dashed,<-] (s1t0) to [out=120,in=-120] (s2t0);
            \draw[dashed,<-] (s2t0) to [out=120,in=-120] (s3t0);
        
            \node[state]                    (s0t1) [right of = s0t0]      {$s_0t_1$};
            \node[state]                    (s1t1) [above of = s0t1]      {$s_1t_1$};
            \node[state]                    (s2t1) [above of = s1t1]       {$s_2t_1$};
            \node[state]                    (s3t1) [above of = s2t1]      {$s_3t_1$};

            \draw[dashed,->] (s0t1) to [out=60,in=-60] (s1t1);
            \draw[line width=1mm, red, dashed,->] (s1t1) to [out=60,in=-60] (s2t1);
            \draw[line width=1mm, red, dashed, dashed,->] (s2t1) to [out=60,in=-60] (s3t1);
            \draw[dashed,<-] (s0t1) to [out=120,in=-120] (s1t1);
            \draw[line width=1mm, blue, dashed,<-] (s1t1) to [out=120,in=-120] (s2t1);
            \draw[line width=1mm, blue, dashed, dashed,<-] (s2t1) to [out=120,in=-120] (s3t1);

            \path[->]
                (s0t0) edge   node {}     (s0t1)
                (s0t0) edge   node {}     (s1t1)
                (s1t0) edge   node {}     (s0t1)
                (s1t0) edge   node {}     (s1t1)
                (s1t0) edge   node {}     (s2t1)
                (s2t0) edge   node {}     (s1t1)
                (s2t0) edge   node {}     (s2t1)
                (s2t0) edge   node {}     (s3t1)
                (s3t0) edge   node {}     (s2t1)
                (s3t0) edge   node {}     (s3t1);
                
            \path[line width=0.5mm, red, ->]
                (s0t0) edge  node {}     (s1t1);
                
            \path[line width=0.5mm, green, ->]
                (s2t0) edge  node {}     (s2t1);
                
            \path[line width=0.5mm, blue, ->]
                (s3t0) edge  node {}     (s3t1);

            \node at ([xshift=5mm]s0t0) {\includegraphics[width=0.8cm]{figures/red.png}};
            \node at ([xshift=5mm]s2t0){\includegraphics[width=0.8cm]{figures/green.png}};
            \node at ([xshift=5mm]s3t0){\includegraphics[width=0.8cm]{figures/blue.png}};
    
            \node at ([xshift=5mm]s1t1) {\includegraphics[width=0.8cm]{figures/red.png}};
            \node at ([xshift=5mm]s2t1){\includegraphics[width=0.8cm]{figures/green.png}};
            \node at ([xshift=5mm]s3t1){\includegraphics[width=0.8cm]{figures/blue.png}};
            
            \node at ([yshift=-8mm]s0t0) {t = 0};
            \node at ([yshift=-8mm]s0t1) {t = 1};
            \node[below of=s0t1, yshift=12mm, xshift=-12mm] {\large Time-extended graph of $\EuScript{N}$};
            \end{tikzpicture}
    }
\vspace{-2mm}
\caption{Mobility-communication-network $\EuScript{N}$ and the time-extended graph of $\EuScript{N}$ for $T = 1$. A solution for the problem of sharing data between the blue and red agents are shown in the time-extended graph.}
\label{fig:multi-hop-communication}
\vspace{-3mm}
\end{figure}

Note that only one mobility transition is allowed for each time step $t$, while information can potentially travel over multiple communication edges within a time step via a multi-hop link. Fig. \ref{fig:multi-hop-communication} shows an example where the red and blue agents share information with each other using multiple communication edges within a time step by sending information via the green agent. To associate information paths with agent mobility we introduce the notation of a \emph{valid} information path.

\begin{definition}
Consider a network $\EuScript{N}=(S,\rightarrow,\rightsquigarrow)$ and a collection of mobility paths  $\left\{ \pi^r: s^r_{0}s^r_{1}s^r_{2} \ldots s_T^r  \right\}$. An information path $\bar{\pi}$ : $\mathfrak s_0^0 \mathfrak s_0^1 \ldots \mathfrak s_0^{n_0} \mathfrak s_1^0 \mathfrak s_1^1 \ldots \mathfrak s_1^{n_1} \ldots \mathfrak s_T^{n_T}$ is \emph{valid} with respect to the mobility paths $\{\pi^r\}$ if the following conditions are satisfied for all $t = 0, \ldots, T$:
\begin{align}
    \label{eq:valid_req1} \left\{ \mathfrak s^i_t \mid i=0, \ldots, n_t \right\} & \subseteq \left\{ s^r_{t} \mid r = 1 , \ldots , R \right\}, \\
    \label{eq:valid_req2} (\mathfrak s_t^{n_t}, \mathfrak s_{t+1}^0) & \in \left\{ \left( s^r_{t}, s^r_{t+1} \right) \mid r = 1, \ldots, R \right\}.
\end{align}
\end{definition}
Equation \eqref{eq:valid_req1} implies that there must be agents present at all locations that are used to transmit information over communication edges, and equation \eqref{eq:valid_req2} specifies that if information moves along a mobility edge, then at least one agent must undertake that transition at the same time step.

An advantage of intermittent connectivity is that it allows for directed information transfer. In many applications we are only interested in transferring information between specific agents instead of sharing all information among all agents. A useful formulation  used throughout this work is to enforce a subset of agents to transfer their information to all agents in another subset. We denote the set of agents initially having access to important information by sources $\texttt{src} \subset \{ 1, \ldots, R \}$ and the set of agents that are to receive information by sinks $\texttt{snk} \subset \{ 1, \ldots, R \}$. We require that all agents in $\texttt{snk}$ should receive information from all agents in $\texttt{src}$, i.e. that there should exist an agent-to-agent information path  $\bar{\pi}^{i,j}$ from every agent $i \in \texttt{src}$ to each agent $j \in \texttt{snk}$. We are now ready to formulate the intermittent connectivity problem.

\begin{problem}[Intermittent Connectivity]
\label{prob:p1}
Consider a network $\EuScript{N}$ with $R$ agents, initial conditions $\{ s_0^r \}$, and time horizon $T$. Find mobility paths $\{ \pi^{r} \}$ such that for each pair $(i, j) \in \texttt{src} \times \texttt{snk}$ there exists an agent-to-agent information path $\bar \pi^{i,j}$ that is valid with respect to $\{ \pi^r \}$ and such that
\begin{equation}
    \label{eq:p1_obj}
    \mathfrak R - \sum_r \sum_{t=0}^{T-1}  C(t, s_t^r, s_{t+1}^r) -\sum_{b \in \texttt{src}} \sum_{(t, s, s') \in E_b}  \widetilde C(t, s, s')
\end{equation}
is maximized, where $\mathfrak R$ is a reward function that gives rewards for sending agents to states:
\begin{equation}
\label{eq:reward_structure}
     \mathfrak R = \sum_{s \in S} \sum_{k=1}^R \begin{cases} \mathfrak R(s, k), & \text{if} \; \sum\limits_{r=1}^R \mathbf{1}_{s_T^r = s} \geq k, \\ 
     0, & \text{otherwise},\end{cases} 
\end{equation}
where $\mathbf{1}$ is the indicator function, and $E_b$ contains communication edges used in information paths from source $b$:
\begin{equation}
\label{eq:edge_set}
    E_b = \left\{ (t, \mathfrak s_t^{k}, \mathfrak s_t^{k+1}) \in \rightsquigarrow \; \mid \mathfrak s_t^k, \mathfrak s_t^{k+1} \in \bar \pi^{b,j}, j \in \texttt{snk} \right\}.
\end{equation}
\end{problem}
The objective function \eqref{eq:p1_obj} penalizes agent transitions and information transmission using the edge weights $C$ and $\widetilde{C}$, while rewarding the final state of agents. In some applications it can be beneficial to send multiple agents which is why the state reward of reaching a state $s$ in the final time step depend on the number of agents that reach $s$. Note that collecting reward $\mathfrak R(s,k)$ for sending $k$ agents to $s$ implies that all rewards $\mathfrak R(s,i)$ for $i < k$ are also received.

\subsection{Information-Consistent Intermittent Connectivity}
\label{ssec:ic_intermitt_conn}

A potential issue with Problem \ref{prob:p1} is that all agents may not be in communication range when a new plan is to be executed. If the optimal solution to Problem \ref{prob:p1} is unique, all agents could solve the problem and arrive at the same solution, but this does not work if the solution is non-unique or if not all agents are computationally equipped to solve the  problem. In this scenario a solution could be to solve a single instance of Problem \ref{prob:p1} and then distribute the resulting plan to all agents via intermittent connectivity. However, there are two effects that need to be considered when planning in this fashion: agents that have not yet received the plan can not be expected to \textit{(i)} move from their initial position and \textit{(ii)} transmit information to other agents. These restrictions need to be accounted for in the planning step. We assume that there is a \emph{master} agent that computes the plan and thus knows it at $t=0$, but that other agents can not act until they have received information from the master agent. In the following, information paths originating from the master agent are denoted $^m \bar {\pi}$.

\begin{definition}
\label{def:information_consistent_mob}
Consider a mobility-communication-network $\EuScript{N}=(S,\rightarrow,\rightsquigarrow)$ and an information path $^m\bar{\pi} $. A mobility path $\pi = s_0 s_1 s_2 \ldots s_T$ is \emph{information-consistent} with respect to $^m\bar \pi$ if for all $t = 0, \ldots, T$ it holds that
\begin{align}
\label{eq:information_consistent_mob}
    &s_0 \not \in \left\{\mathfrak s_j^{i} \in ^m\bar{\pi} \mid 0 \leq j \leq t, \; 0 \leq i \leq n_j \right\} \implies s_t = s_0.
\end{align}
Similarly, an information path $\bar{\pi} = \mathfrak s_0^0 \mathfrak s_0^1 \ldots \mathfrak s_0^{n_0} \ldots s_T^{n_T}$ is \emph{information-consistent} with $^m\bar \pi$ if for all $t = 0, \ldots, T$,
\begin{align}
\label{eq:information_consistent_comm}
    &\mathfrak s_t^l \not \in \left\{\mathfrak s_j^{i} \in {^m\bar{\pi}} \mid 0 \leq j \leq t, \; 0 \leq i \leq n_j \right\} \implies n_t = l.
\end{align}
\end{definition}

Equation \eqref{eq:information_consistent_mob} implies that an agent is not allowed to move until the information path $^m \bar {\pi}$ has reached the initial state $s_0$. Since an information path consists of one mobility edge in each time step, enforcing $n_t = l$ as in \eqref{eq:information_consistent_comm} denies the usage of any communication edge at time $t$ and stage $l$. Thus an agent at state $s_t^l$ is not allowed to transmit information until the information path $^m\bar{\pi}$ has reached that state. We are now ready to formulate the information-consistent version of Problem \ref{prob:p1}.

\begin{problem}
[Information-Consistent Intermittent Connectivity] 
Consider a network $\EuScript{N}$ with R agents, initial conditions $\{ s_0^r \}$, time horizon $T$, and a designated master $m \in \{ 1, \ldots, R\}$. Find a collection of mobility paths $\{ \pi^{r} \}$ such that:
\begin{itemize} 
    \item for each pair of \texttt{src} $i$ and \texttt{snk} $j$ there exists an information path $\bar \pi^{i,j}$ that is valid with respect to $\{ \pi^r \}$,
    \item for each agent $r$ there exists an information path $^m \bar \pi_{r}$ starting in $s_0^m$ that is valid with respect to $\{ \pi^r \}$,
    \item all mobility paths are information-consistent with $^m \bar \pi_{r}$,
    \item all information paths are information-consistent with $^m \bar \pi_{i}$,
\end{itemize}
and such that
\begin{equation}
    \label{eq:p2_obj}
     \mathfrak R - \sum_r \sum_{t=0}^{T-1}  C(t, s_t^r, s_{t+1}^r) - \hspace{-5mm} \sum_{b \in \texttt{src} \cup \{ m\}} \sum_{(t, s, s') \in E_b} \hspace{-3mm}  \widetilde C(t, s, s')
\end{equation}
is maximized, where $\mathfrak R$ is given by \eqref{eq:reward_structure} and $E_b$ by \eqref{eq:edge_set}.
\label{prob:p2}
\end{problem}

Problem \ref{prob:p2} is analogous to Problem \ref{prob:p1} but with the additional requirement that for each agent $r$, there should exist a valid information path $^m\bar\pi_{r}$ that makes the mobility path $\pi^{r}$ and all information paths $\bar\pi^{r,j}$ information-consistent. 

\section{Intermittent Connectivity via Flows}
\label{sec:solution}

In combinatorial optimization, network flows are instrumental for solving many graph optimization problems. Examples include the maximal flow and minimum-cost flow problems \cite{Ahuja1993}. Flow problems generally involve finding a feasible flow that optimizes some cost function. A flow is feasible if it starts in a source state, ends in sink states, and the flow over each edge is less than its capacity. We note that the intermittent connectivity problem has similarities with the multi-commodity flow problem with multiple flows, sources and sinks. Both Problem \ref{prob:p1} and \ref{prob:p2} involve finding multiple information paths with predefined initial states. However, there are two significant differences between a standard flow problem and the intermittent connectivity problems: in the intermittent connectivity problems, the terminal state for each information path is the location of an agent instead of a static state, and edge capacities depend on the movement of agents. 

\subsection{Intermittent Connectivity Problem}
\label{sec:dist_int_sec}

In the following we introduce variables and constraints that form an ILP that solves Problem \ref{prob:p1}.

\subsubsection{Variables}

We start by introducing optimization variables. Since there are multiple information flows we annotate flow variables with a \emph{flow identifier} (or id).

\begin{definition}
Define variables $z_{rst}$, $y_{sk}$, $x_{rijt}$, $f^b_{ijt}$, $\bar{f}^b_{ijt}$ as
\begin{align}
    z_{rst} =  &\begin{cases}
                    1 \text{ if agent $r$ is at state $s$ at time $t$},\\
                    0 \text{ otherwise, }
                \end{cases} 
                \label{var:zrvt}\\
    y_{sk} =  &\begin{cases}
                    1 \text{ if there are $\geq k$ agents in $s$ at time $T$},\\
                    0 \text{ otherwise, }
                \end{cases}
                \label{var:yvk}\\
    x_{rss't} =  &\begin{cases}
                    1 \text{ if agent $r$ uses $(s,s') \in \rightarrow$ at time $t$},\\
                    0 \text{ otherwise, }
                \end{cases}
                \label{var:xrijt}\\
    {f}^b_{ss't} =  &\text{ flow over $(s,s')\in\rightarrow$ with id $b$ at time $t$},
    \label{var:fbijt}\\
    \bar{f}^b_{ss't} = &\text{ flow over $(s,s')\in\rightsquigarrow$ with id $b$ at time $t$}.
    \label{var:fbarbijt}
\end{align}
\end{definition}

The variables $z_{rst}$ and $x_{rss't}$ are binary variables that encode positions and transitions of agents at each time step while $y_{sk}$ counts how many agents that occupy certain states at the final time step. To represent information transmitted over mobility and communication edges we use $f^b_{ss't}$ and $\bar{f}^b_{ss't}$. 

\subsubsection{Dynamics constraints}

For space reasons we omit the sets of ``for all'' quantifiers in the following. That is, instead of $\forall r \in \{ 1, \ldots, R \}$, $\forall s \in S$, and $\forall t \in \{ 0, \ldots, T \}$ we simply write $\forall r$, $\forall s$, and $\forall t$. We can encode the dynamics of the agents in the mobility-communication-network $\EuScript{N}$ using the variables \eqref{var:zrvt} and \eqref{var:xrijt} \cite{Banfi2018}.  The possible transitions at each time can be written as follows.
\begin{align}
    z_{rs(t+1)} &= \sum_{s'\in \mathbf{C}_{\rightarrow}^{-}(s)}  x_{rs'st}, \quad \forall r, s, t, \label{eq:in_dyn}\\
    z_{rst} &= \sum_{s'\in \mathbf{C}_{\rightarrow}^{+}(s)} x_{rss't}, \quad \forall r, s, t. \label{eq:out_dyn}
\end{align}
Equation \eqref{eq:in_dyn} means that an agent transiting to a state $s$ must do so via an incoming mobility edge $(s',s)\in\rightarrow$ and equation \eqref{eq:out_dyn} specifies that an agent at state $s$ must use an outgoing mobility edge $(s,s')\in\rightarrow$.

\subsubsection{Flow constraints}

In order to establish information paths from each \texttt{src} to each \texttt{snk} we look at the net inflow of information with identifier $b$ to a state. To simplify notation, let $F_{st}^b$ denote the net inflow to state $s$ at time $t$ of flow $b$:
\begin{equation}
\label{eq:netflow}
\begin{aligned}
    F_{st}^b = &\sum_{s'\in C_{\rightarrow}^-(s)} f^b_{s's(t-1)} + \sum_{s'\in C_{\rightsquigarrow}^-(s)} \bar{f}^b_{s'st} \\
    & - \sum_{s'\in C_{\rightarrow}^+(s)} f^b_{ss't} - \sum_{s'\in C_{\rightsquigarrow}^+(s)} \bar{f}^b_{ss't}.
\end{aligned}
\end{equation}

We can construct a one-to-many flow from one agent $i\in\texttt{src}$ to all agents $j\in\texttt{snk}$ and thus incorporate all information paths $\underset{j\in\texttt{snk}}{\cup} \bar\pi^{i,j}$ into a single flow. In this case the identifier $b$ for the flow corresponds to the source agent. The resulting constraint can be written as follows.
\begin{equation}
\label{eq:flow_onetomany}
    F_{st}^b = \begin{cases}
        -|\texttt{snk}| z_{bs0}, & \text{if $t = 0$}, \\
        \sum\limits_{r \in \texttt{snk}} z_{rsT}, & \text{if $t = T$}, \\
        0, & \text{otherwise},
    \end{cases} \qquad \forall b \in \texttt{src}, s, t.
\end{equation}
We note that the state including the source agent initially outputs $|\texttt{snk}|$ units of flow and that the net inflow to any state is zero for $0 < t < T$. This ensures that information can not vanish or be created and that all information initially comes from the source agents. The condition for the final time $T$ implies that a state with $k$ agents from \texttt{snk} at time $T$ receives $k$ units of this flow. Therefore each agent in \texttt{snk} is guaranteed to receive one unit of the flow with identifier $b$. As this constraint should hold for all information paths we enforce this constraint for all source agents $b\in\texttt{src}$. Note that in constraint \eqref{eq:flow_onetomany} the terminal state of each agent is free and is determined by $z_{rst}$, as opposed to classical flow problems where it is predefined.

\subsubsection{Bridge constraints}

As previously mentioned, in classical flow problems the capacity of each edge is static and predefined. However, in a mobility-communication network with dynamic agents, the capacity of each edge is dynamic and a function of the position of the agents. For communication edges, information can be transmitted over an edge $(s,s')\in\rightsquigarrow$ at time $t$ only if there is an agent in state $s$ and another agent in state $s'$ at time $t$. This can be written as the big-M constraint \eqref{eq:communication_capacity} where $N\geq R$. For a mobility edge, information can only be transferred over the edge if an agent traverses it. Therefore, the flow over a mobility edge $(s,s')\in\rightarrow$ at time $t$ must be zero unless at least one agent utilizes that edge, which can be written as the big-M constraint \eqref{eq:mobility_capacity}.
\begin{align}
    & \bar{f}^b_{ss't} \leq N \min\left(\sum_r z_{rst}, \sum_r z_{rs't}\right), \quad \forall b, s, s', t, \label{eq:communication_capacity} \\
    & f^b_{ss't} \leq N \sum_r x_{rss't}, \quad \forall b, s, s', t. \label{eq:mobility_capacity}
\end{align}

\subsubsection{Cost function}
\label{sssec:cost_function}

To implement the objective function in Problems \ref{prob:p1} and \ref{prob:p2}, we can penalize state transitions and information transmission costs $C$ and $\bar{C}$ using the variables \eqref{var:zrvt}-\eqref{var:fbarbijt}. The cost of using the mobility (resp. communication) edge $(s,s')$ at time $t$ can be written as $x_{rss't}C(t, s, s')$ (resp. $\bar{f}_{ss't}^b \bar{C}(t, s, s')$). We can formalize the reward of reaching state $s$ with $k$ agents in the final time as $\mathfrak R(s,k)y_{sk}$. We now apply the cost and reward for all time steps, states, edges, and agents and get the following objective function that is linear in the variables introduced above:
\begin{align}
    \label{eq:objective} c_\textrm{obj} &= \sum_{s \in S} \left( \sum_{k = 1}^R y_{sk}\mathfrak R(s,k) - g_1(s) - g_2(s) \right), \\
    g_1(s) &= \sum_{\substack{r\in \{1,\ldots,R \}\\s'\in C^{-}_\rightarrow(s)\\t\in \{0,\ldots,T-1 \}}} x_{rs'st}C(t,s',s), \\   g_2(s) &=  \sum_{\substack{s'\in C^{-}_\rightarrow(s)\\t\in\{1,\ldots,T\}\\b \in \texttt{src}}} \bar{f}_{s'st}^b \bar{C}(t,s',s).
    \label{eq:g2}
\end{align}
Note that for the transition and communication costs, only the incoming edges $C^-$ are used to prevent double-counting of costs. Collecting a reward is restricted by $y_{sk}$ to agents final state, therefore the reward $\mathfrak R$ in Problem \ref{prob:p1} and \ref{prob:p2} is 
$$\mathfrak R = \sum_{s \in S} \sum_{k = 1} ^R y_{sk}\mathfrak R(s,k).$$ 
It follows by inspection that 
\begin{align*}
&\sum_r \sum_{t=0}^{T-1}  C(t, s_t^r, s_{t+1}^r) = \sum_{s \in S} g_1(s), \\ 
&\sum_{b \in \texttt{src} \cup \{ m\}} \sum_{(s, s') \in E_b}  \widetilde C(t, s', s) = \sum_{s \in S} g_2(s).
\end{align*}
Therefore \eqref{eq:objective} is equivalent to the objective function \eqref{eq:p1_obj} in the intermittent connectivity problem. If we change the index set to $b\in \texttt{src} \cup \{ m \}$ in \eqref{eq:g2}, then \eqref{eq:objective} is equivalent to the information-consistent objective function \eqref{eq:p2_obj}.

As the terminal state reward $\mathfrak R(s,k)$ can only be collected if there are $k$ agents in state $s$ at time $T$ we need to constrain $y_{sk}$. Enforcing the following constraint for all $k$:s associated with rewards is equivalent with Definition \ref{var:yvk}.
\begin{align}
    &  y_{sk} \leq \frac{\sum_r z_{rsT}}{k}  \quad \forall s. \label{eq:multi-reward-constraint}
\end{align}
We are now ready to pose a solution to Problem \ref{prob:p1}.
\begin{proposition} 
\label{prop:s1}
Consider the ILP:
\begin{equation}
\label{eq:ilp_prop1}
\begin{aligned}
    \max \quad &\eqref{eq:objective},\\
    \textrm{\normalfont subject to} \quad & \eqref{eq:in_dyn}, \eqref{eq:out_dyn}, \eqref{eq:flow_onetomany}-\eqref{eq:mobility_capacity}, \eqref{eq:multi-reward-constraint}.
\end{aligned}
\end{equation}
A solution to \eqref{eq:ilp_prop1} is also a solution to Problem \ref{prob:p1}.
\end{proposition}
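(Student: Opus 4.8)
The plan is to establish a correspondence between the feasible points of the ILP \eqref{eq:ilp_prop1} and the feasible solutions of Problem \ref{prob:p1} that preserves the objective value, from which optimality transfers immediately. The equivalence of the two objective functions is already verified by inspection in Section \ref{sssec:cost_function}, so the substance is the feasibility correspondence. I would prove it in two directions: every ILP-feasible assignment induces a Problem \ref{prob:p1}-feasible collection of mobility paths (with equal objective), and conversely every feasible solution of Problem \ref{prob:p1} is encodable as an ILP-feasible point (with equal objective). Together these show that the optimal values coincide, so the mobility paths read off from an ILP optimizer solve Problem \ref{prob:p1}.

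For the forward direction, I would first extract mobility paths: the dynamics constraints \eqref{eq:in_dyn}--\eqref{eq:out_dyn} force each agent $r$ to occupy exactly one state per time step and to move only along mobility edges, so the active variables $x_{rss't}$ read off a well-defined mobility path $\pi^r = s_0^r \ldots s_T^r$. Next, for each source $b \in \texttt{src}$ I would apply integral flow decomposition to $(f^b, \bar f^b)$ on the time-extended graph. By \eqref{eq:flow_onetomany} the initial state of agent $b$ emits $|\texttt{snk}|$ units at $t=0$, the net inflow \eqref{eq:netflow} vanishes for $0 < t < T$, and the final state of each sink agent absorbs one unit at $t=T$; hence the flow decomposes into $|\texttt{snk}|$ unit-flow paths (plus cycles, which are discarded), exactly one terminating at each sink agent $j$'s location $s_T^j$ and all originating at $s_0^b$. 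Because in \eqref{eq:netflow} the communication variables $\bar f^b_{ss't}$ move flow within time step $t$ while the mobility variables $f^b_{ss't}$ advance the time index by one, each such unit path automatically alternates between (possibly multi-hop) communication edges inside a time step and a single mobility edge between consecutive time steps, which is precisely the structure required of an agent-to-agent information path $\bar\pi^{b,j}$.

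I would then verify validity of each $\bar\pi^{b,j}$ against $\{\pi^r\}$ using the bridge constraints. Wherever the path uses a communication edge $(s,s')$ at time $t$ we have $\bar f^b_{ss't} > 0$, so \eqref{eq:communication_capacity} forces $\sum_r z_{rst} \geq 1$ and $\sum_r z_{rs't} \geq 1$, placing agents at both endpoints and yielding requirement \eqref{eq:valid_req1}; wherever the path advances along a mobility edge $(s,s')$ at time $t$ we have $f^b_{ss't} > 0$, so \eqref{eq:mobility_capacity} forces $\sum_r x_{rss't} \geq 1$, giving requirement \eqref{eq:valid_req2}. This completes the forward map. For the converse, given a feasible solution of Problem \ref{prob:p1} I would set $z, x, y$ directly from the mobility paths and, for each $b$, superimpose unit flows along the information paths $\{\bar\pi^{b,j}\}_{j \in \texttt{snk}}$ to define $f^b, \bar f^b$; validity of those paths guarantees that the right-hand sides of \eqref{eq:communication_capacity}--\eqref{eq:mobility_capacity} are positive wherever flow is placed, so every constraint of \eqref{eq:ilp_prop1} holds and the objectives match.

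I expect the main obstacle to be the flow-decomposition step and the bookkeeping that the decomposed unit paths genuinely realize the information-path definition: one must confirm that exactly one unit path reaches each distinct sink agent, that cycle removal does not disturb the source/sink balance, and that the $\min$ appearing in \eqref{eq:communication_capacity} --- rather than a capacity tied to a single endpoint --- is what couples a communication edge to the presence of agents at \emph{both} of its ends. The guarantee of exactly one unit path per sink agent and the cleanness of the decomposition rely on integrality of the flow variables, which holds because \eqref{eq:ilp_prop1} is an integer program.
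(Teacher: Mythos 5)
Your proposal is correct, and its forward direction is essentially the paper's own argument: the paper likewise reads mobility paths off the dynamics constraints \eqref{eq:in_dyn}--\eqref{eq:out_dyn} (citing \cite{Banfi2018}), invokes the flow decomposition theorem \cite[Theorem 3.5]{Ahuja1993} to split each flow satisfying \eqref{eq:flow_onetomany} into information paths on the time-extended graph, and uses the bridge constraints \eqref{eq:communication_capacity}--\eqref{eq:mobility_capacity} to rule out flow on edges that would violate \eqref{eq:valid_req1}--\eqref{eq:valid_req2}, delegating objective equivalence to Section \ref{sssec:cost_function} exactly as you do. Where you genuinely go beyond the paper is the converse embedding: the paper's proof stops after showing that an ILP-feasible point induces a feasible solution of Problem \ref{prob:p1} with equal objective, which establishes soundness but not, strictly, that the ILP \emph{maximizer} attains the maximum of Problem \ref{prob:p1}; for that one needs your second direction, namely that every feasible collection of mobility and information paths is encodable as an ILP-feasible point of equal value (your superposition of unit flows, with $N \geq R \geq |\texttt{snk}|$ ensuring the big-M capacities are respected). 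So your two-directional correspondence buys a complete optimality argument where the paper leaves the converse implicit, at the cost of the extra bookkeeping you correctly identify (cycle removal, multiple sink agents sharing a terminal state, integrality of the decomposition). One caveat you inherit from the paper rather than introduce: in the converse, if several unit paths from the same source $b$ traverse a common communication edge, the cost term $g_2$ charges that edge once per unit of flow whereas the edge set $E_b$ in \eqref{eq:p1_obj} charges it once, so ``the objectives match'' holds only up to this multiplicity; the same imprecision is already present in the paper's ``by inspection'' identity in Section \ref{sssec:cost_function}, so it is not a defect of your argument relative to the paper's.
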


\begin{proof}
    By the argument in Section \ref{sssec:cost_function}, equation \eqref{eq:objective} is equivalent to the cost function in Problem \ref{prob:p1} when \eqref{eq:multi-reward-constraint} holds. By \cite{Banfi2018} constraints \eqref{eq:in_dyn} and \eqref{eq:out_dyn} result in dynamically feasible mobility paths. Under \eqref{eq:flow_onetomany}, for each agent $r \in \texttt{src}$ there exists a flow over the time-extended graph that starts at $s_0^r$ and that has sinks at the locations at time $T$ of the $\texttt{snk}$ agents. The flow decomposition theorem \cite[Theorem 3.5]{Ahuja1993} implies that this flow can be decomposed into information paths, and these information paths are valid since \eqref{eq:communication_capacity} and \eqref{eq:mobility_capacity} prohibit flow over edges that violate the validity requirement.
\end{proof}

\begin{remark}
We can pose an alternative constraint to \eqref{eq:flow_onetomany} that also guarantees existence of the desired information paths. Instead of constructing a one-to-many flow, we can use a many-to-one flow from all agents $i\in\texttt{src}$ to a single agent $j\in\texttt{snk}$. The interpretation of the identifier $b$ for this flow then changes to the single sink agent $j\in\texttt{snk}$. This formulation would yield a different cost associated with the communication edges in the objective in Problems \ref{prob:p1} and \ref{prob:p2} as the sum would be over sinks instead of sources. This many-to-one flow is formalized by the following constraint:
\begin{equation}
\label{eq:flow_manytoone}
    F_{st}^b = \begin{cases}
        -\sum\limits_{r\in \texttt{src}} z_{rs0}, & \text{if $t = 0$}, \\
        |\normalfont\texttt{src}| z_{bsT}, & \text{if $t = T$}, \\
        0, & \text{otherwise},
    \end{cases} \qquad \forall b \in \normalfont\texttt{snk}, s, t.
\end{equation}
Each one of the constraints \eqref{eq:flow_onetomany} and \eqref{eq:flow_manytoone} ensure the existence of an information path from each source $r\in \texttt{src}$ to all sinks $r\in \texttt{snk}$. Constraint \eqref{eq:flow_onetomany} results in a total of $|\texttt{src}| |S| (T+1)$ constraints and constraint \eqref{eq:flow_manytoone} in $|\texttt{snk}| |S| (T+1)$. It may be advantageous to select the one with the lowest number.
\end{remark}

\subsection{Information-Consistent Intermittent Connectivity Problem}
\label{sec:cent_int_sec}

We now seek to extend the result in Proposition \ref{prop:s1} to a solution for Problem \ref{prob:p2}. The difference between Problem \ref{prob:p1} and \ref{prob:p2} is the existence of yet another information path $^m \bar \pi_r$ for each agent $r$ that the mobility path $\pi^r$ and all information paths $\bar \pi^{r,j}$ are information-consistent with. In the following we will refer to the information path $^m \bar \pi_r$ as the master information path for $r$ as it is initialized by the master agent. In the same manner as in Section \ref{sec:dist_int_sec}, we associate the union of the master information paths $\underset{r}{\cup} {^m \bar \pi_r}$ with a flow in the network, naturally called the \emph{master flow}. Denote the net inflow of the master flow into state $s$ at time $t$ by $F_{st}^m$, defined as in \eqref{eq:netflow}.

\subsubsection{Master constraints}

The main difference between a regular flow $F^b$ and the master flow $F^m$ is that the amount of flow $F^b$ and its terminal state are predefined in terms of agent positions, leading to equality constraints in \eqref{eq:flow_onetomany} and \eqref{eq:flow_manytoone}. For the master flow, there is no predefined amount of flow nor fixed terminal state. Therefore, we cover the worst case where master flow is supplied to every state in the network, leading to the initialization of at most $|S|$ units of master flow in the state containing the master agent. As there are no hard constraints on the terminal state for the master flow, we use inequality constraint to allow any terminal state. Let $s_0(r)$ denote the initial state of agent $r$ and let $m$ denote the master agent; the constraint on the master flow can be formalized as follows:
\begin{equation}
    \label{eq:master_flow} 
    F_{st}^m \geq \begin{cases}
         -|S|, & \text{if $s = s_0(m), t = 0$}, \\
        0, & \text{otherwise},
    \end{cases} \qquad \forall s, t. 
\end{equation}
In order to make sure that mobility path $\pi^r$ and all information paths $\{\bar \pi^{r,j}\}$ are information consistent with some master information path $^m \bar \pi_r$ we need to enforce that agent $r$ is \textit{(i)}, static, and \textit{(ii)}, does not transmit information to other agents until its initial position intersects with $^m \bar \pi_r$. This can be ensured with the following constrains that should hold for all $r$ such that $s_0(r) \neq s_0(m)$:
\begin{align}
    \label{eq:master_static}
    z_{rs_0(r)t} \geq 1 - \sum_{\tau=0}^{t-1} F_{s_0(r) \tau}^m,  \quad \forall r, t, \\
    \label{eq:master_comm_bound}
    N \sum_{\tau = 0}^t F^m_{s_0(r)\tau} \geq \sum_{j} \bar{f}^b_{s_0(r)jt}, \quad \forall b, r, t.
\end{align}
These constraints require a net outflow of the master flow to have occurred, as captured by the sum of net flow over time, before mobility (eq. \eqref{eq:master_static}) or communication (eq. \eqref{eq:master_comm_bound}) is allowed. Together with flow balancing \eqref{eq:master_flow} these constraints imply information consistency as defined in Section \ref{ssec:ic_intermitt_conn}.


\begin{proposition} 
\label{prop:s2}
Consider the ILP:
\begin{equation}
\label{eq:ilp_prop2}
\begin{aligned}
    \max \quad &\eqref{eq:objective}\\
    \textrm{\normalfont subject to} \quad & \eqref{eq:in_dyn}, \eqref{eq:out_dyn}, \eqref{eq:flow_onetomany}-\eqref{eq:mobility_capacity}, \eqref{eq:multi-reward-constraint}, \eqref{eq:master_flow}-\eqref{eq:master_comm_bound}
\end{aligned}
\end{equation}
A solution to \eqref{eq:ilp_prop2} is also a solution to Problem \ref{prob:p2}.
\end{proposition}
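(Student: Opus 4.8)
The plan is to build on Proposition~\ref{prop:s1} and then account for the additional master flow. Since the ILP~\eqref{eq:ilp_prop2} contains every constraint appearing in~\eqref{eq:ilp_prop1}, Proposition~\ref{prop:s1} already guarantees that the variables $z_{rst}$ and $x_{rss't}$ encode dynamically feasible mobility paths $\{\pi^r\}$ and that, for each pair $(i,j)\in\texttt{src}\times\texttt{snk}$, there exists a valid agent-to-agent information path $\bar\pi^{i,j}$. Moreover, by the argument of Section~\ref{sssec:cost_function} with the index set extended to $b\in\texttt{src}\cup\{m\}$, the objective~\eqref{eq:objective} equals the information-consistent objective~\eqref{eq:p2_obj}. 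What remains is to exhibit, for each agent $r$, a valid master information path $^m\bar\pi_r$ and to verify that $\pi^r$ and all $\bar\pi^{r,j}$ are information-consistent with it.

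First I would extract the master paths from the master flow $F^m$. Constraint~\eqref{eq:master_flow} injects at most $|S|$ units at $s_0(m)$ and forces nonnegative net inflow everywhere else, so $F^m$ is a feasible single-source flow on the time-extended graph that can deposit up to one unit of accumulation at each state. Applying the flow decomposition theorem~\cite[Theorem 3.5]{Ahuja1993} yields paths emanating from $s_0^m$; because the capacity constraints~\eqref{eq:communication_capacity} and~\eqref{eq:mobility_capacity} also bind for $b=m$, every decomposed path uses only communication edges with occupied endpoints and mobility edges actually traversed by some agent, so each is a valid information path. For each agent $r$ I would let $^m\bar\pi_r$ be the decomposed path that deposits a unit at $s_0^r$, i.e.\ the path along which the accumulated inflow $\sum_{\tau\le t}F^m_{s_0(r)\tau}$ first becomes positive; when no master flow reaches $s_0^r$ the consistency conditions below are vacuous and any valid path from $s_0^m$ suffices.

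It then remains to check consistency, and crucially both conditions are driven by the \emph{same} accumulated quantity at $s_0^r$, so a single path $^m\bar\pi_r$ governs $\pi^r$ and every $\bar\pi^{r,j}$ at once. For the mobility path, whenever $^m\bar\pi_r$ has not yet reached $s_0^r$ by time $t$ the accumulated inflow $\sum_{\tau=0}^{t-1}F^m_{s_0(r)\tau}$ is zero, so~\eqref{eq:master_static} forces $z_{rs_0(r)t}=1$, that is $s_t^r=s_0^r$; this is exactly~\eqref{eq:information_consistent_mob}. For the information paths, the same accumulation enters~\eqref{eq:master_comm_bound}: before $^m\bar\pi_r$ reaches $s_0^r$ its right-hand side is pinned to zero, forcing every communication outflow $\bar f^b_{s_0(r)jt}$ from $s_0^r$ to vanish and hence forbidding any communication edge at that state, which is precisely the condition~\eqref{eq:information_consistent_comm} of Definition~\ref{def:information_consistent_mob}.

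The main obstacle I anticipate is the decomposition step: because~\eqref{eq:master_flow} is an inequality that over-provisions up to $|S|$ units rather than a balanced flow with prescribed terminal demands, I must argue carefully that the portion of $F^m$ that actually accumulates at each $s_0^r$ decomposes into valid paths, and that the path selected there is the one whose arrival time coincides with the first positive value of the accumulated inflow. Tying the scalar quantity ``accumulated net inflow at $s_0^r$'' to the combinatorial statement ``the master path has reached $s_0^r$'' is the crux; once this correspondence is established the two consistency verifications are routine.
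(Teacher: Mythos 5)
Your overall route coincides with the paper's: the ILP in \eqref{eq:ilp_prop2} is the ILP of Proposition~\ref{prop:s1} augmented with the master constraints, so Proposition~\ref{prop:s1} (together with the extended index set $b\in\texttt{src}\cup\{m\}$ in the objective) handles everything except information consistency, which is then attributed to \eqref{eq:master_flow}--\eqref{eq:master_comm_bound}. The paper's printed proof is only a two-sentence sketch deferring to the motivating discussion, so your added detail --- decomposing the inequality-constrained master flow into source-rooted paths, choosing the earliest-depositing path, and noting that flow merely passing through $s_0(r)$ contributes zero to the accumulated net inflow $\sum_{\tau\le t}F^m_{s_0(r)\tau}$, so only deposited flow releases an agent --- is genuinely useful and your verification of mobility consistency \eqref{eq:information_consistent_mob} via \eqref{eq:master_static} is sound.

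However, the last step over-claims, and this is where the real gap sits (not, as you suspect, in the accumulation-to-arrival correspondence, which works out). Condition \eqref{eq:information_consistent_comm} constrains \emph{every} state $\mathfrak s_t^l$ along an information path, whereas \eqref{eq:master_comm_bound} only forbids communication outflow at \emph{initial} states $s_0(r)$. Consider a hop of $\bar\pi^{i,j}$ at a relay state $s$ that is nobody's initial position because the relay agent moved there after receiving master information: the ILP places no constraint on that hop, and your chosen ${}^m\bar\pi_i$ --- the earliest decomposed path depositing at $s_0^i$ --- in general never visits $s$, so \eqref{eq:information_consistent_comm} is violated for that hop and your claim that the constraint is ``precisely'' condition \eqref{eq:information_consistent_comm} fails. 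The repair is to abandon the earliest-deposit choice and construct ${}^m\bar\pi_i$ by concatenation: take a master-flow path reaching $s_0^i$ at time $t^*$, then let it \emph{shadow} the tail of $\bar\pi^{i,j}$. Before $t^*$, \eqref{eq:master_static} pins agent $i$ at $s_0^i$ and \eqref{eq:master_comm_bound} forbids any flow-$i$ hop out of $s_0(i)$, so $\bar\pi^{i,j}$ makes no hops before $t^*$; from $t^*$ on, every mobility and communication edge used by $\bar\pi^{i,j}$ is equally usable by the master path, since validity \eqref{eq:valid_req1}--\eqref{eq:valid_req2} depends only on $\{\pi^r\}$, and since the visited-set in Definition~\ref{def:information_consistent_mob} does not distinguish hop order within a time step, every hop state of $\bar\pi^{i,j}$ is then visited by ${}^m\bar\pi_i$ in time. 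The same trick fixes a second, smaller omission in your argument: a decomposed path terminating at $(s_0(r),t)$ with $t<T$ is not an information path in the sense of Definition~3, which runs to time $T$; extend it by riding along agent $r$'s own mobility path, which is valid by construction.
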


\begin{proof}
    Problem 2 is a further constrained version of Problem \ref{prob:p1}, thus Proposition \ref{prop:s1} applies for the constraints that are common between the two. As motivated above, constraints \eqref{eq:master_flow}-\eqref{eq:master_comm_bound} in addition ascertain the information consistency requirements in Problem 2.
\end{proof}


\subsection{Extensions}

In order to demonstrate the flexibility of the optimization framework, this section demonstrates multiple extension relevant to multi-agent networks.

\subsubsection{Agent Heterogeneity}

In a multi-agent exploration setting, agents might be equipped with different types of sensors so that only a subset of the agents are capable of exploring and mapping unknown parts of the environment, while other agents perform tasks in the known environment. Therefore it can be useful to only allow a subset of agents to collect rewards by reaching a terminal state. This is can be solved by only summing over agents capable of exploring in constraint \eqref{eq:multi-reward-constraint}. Agents may also have different mobility capabilities (e.g., ground and aerial agents), which can be accommodated by imposing additional constraints that specify that certain mobility edges can only be used by certain agents.

In systems where it is useful to have static agents, such as exploration with a static base station, stationarity must be considered in the planning. This is incorporated by not allowing an agent to leave its initial position using the constraint
\begin{equation}
    z_{rst} = 
    \begin{cases}
        1, & \text{if $s = s_0(r)$}, \\
        0, & \text{otherwise},
    \end{cases}
    \quad \forall t, r \in R_{static},
\end{equation}
where $R_{static}$ is the set of static agents.

\subsubsection{Collision Avoidance}

In multi-agent networks, we can guarantee that two agents $i$ and $j$ can not be in the same position or use facing edges by enforcing the following constraints: 
\begin{align}
    x_{iss't}  \hspace{-1mm} +  \hspace{-1mm} x_{js'st}  \hspace{-1mm} & \leq  \hspace{-1mm} 1 \; \forall t, (s,s') \in \rightarrow, (i,j)\in \hspace{-1mm} \binom{\{1,\ldots,R\}}{2}, \label{eq:trans_avoid}\\
    z_{ist} + z_{jst} & \leq 1 \; \forall t, s, (i,j) \in \binom{\{1,\ldots,R\}}{2}, \label{eq:pos_avoid}
\end{align}
where $\binom{\{1,\ldots,R\}}{2}$ is the set of all possible 2-combinations of agents in the network. In a heterogeneous group of agents, this constraint might only be necessary for a certain subsets of agents and vertices in the network. As an example, an application may require that ground agents avoid other ground agents, that aerial agents avoid other aerial agents, but allow ground and aerial agents to be in the vicinity of each other.

\subsubsection{Awareness-Based Reward}

In applications such as exploration or mapping it might not be enough to only have agents in beneficial positions unless they also have information about its duties, such as exploration a new area or acting as a relay for other agents. It is therefore natural to require that agents have knowledge about the execution plan in order to collect rewards. This is not an issue in Problem \ref{prob:p1} as all agents plan deterministically, but for Problem \ref{prob:p2} we have to restrict the ability to receive the terminal state reward $\mathfrak R(v,k)$ to agents with knowledge about the solution. We can accomplish this with the following constraint:
\begin{equation}
    \label{eq:master_reward}
    y_{sk} \leq \sum_{\tau = 0}^T F_{s\tau}^m  \quad \forall k, s \neq s_0(m). 
\end{equation}
Constraint \eqref{eq:master_reward} together with \eqref{eq:master_static} ensures that an agent that collects a reward has been provided with master information. 

\subsubsection{Multiple Master Agents}
\label{sssec:multiple_master}

In some settings the master plan may be available to multiple agents at the initial time step. This would for example be the case if a subset of agents solve Problem \ref{prob:p2} deterministically and find the same unique solution, or in a tunnel exploration scenario where planning is done outside the tunnel but the tunnel has multiple openings where agents can enter. It is straightforward to extend Problem \ref{prob:p2} and its solution to the case with multiple masters by substituting the condition $s_0(r) \not = s_0(m)$ for $s_0(r) \not \in \{s_r(0)|r\in m\}$ in constraints \eqref{eq:master_flow}-\eqref{eq:master_comm_bound} and \eqref{eq:master_reward}. This corresponds to initializing $|S|$ units of master flow in the initial state of each master agent. Even with this extension only a single master flow is used since the master information is the same irrespective of master agent.

\begin{figure}[t]
    \begin{center}
    \resizebox{\columnwidth}{!}{
        \begin{tikzpicture}[auto, node distance=0.6cm,>=latex']
        \node [block, fill = gray!20, name=planner, text width=4cm,align=center]{Update graph \\Plan 'Pre-Exploration' \\ Plan 'Post-Exploration'};
        \node [block, fill = gray!20, right=of planner, text width=3cm,align=center] (exe1) {Execute \\ 'Pre-Exploration'};
        \node [block, fill = gray!20, right= of exe1, text width=2cm,align=center, pin={[pinstyle]above:Unknown environment}] (exp) {Explore \\ frontiers};
        \node [block, fill = gray!20, right=of exp, text width=3cm,align=center] (exe2) {Execute \\ 'Post-Exploration'};
        \draw [-latex] (planner) -- node {} (exe1);
        \draw [-latex] (exe1) -- node {} (exp);
        \draw [-latex] (exp) -- node {} (exe2);
        \draw [-latex] (exe2) -- ++ (0,-1) -| node [pos=0] {Explored graph} (planner);
        \draw [-latex] (planner) -- ++ (2.4,0)-- ++ (0,1)-- ++ (-1.7,0) -| node [above, text width=3cm] {'Pre-Exploration' \\ 'Post-Exploration'} (planner);
        \end{tikzpicture}
    }
    \end{center}
    \vspace{-3mm}
    \caption{Block diagram illustrating the use the pre-exploration and post-exploration plans. The final agent positions in the post-exploration problem become the initial positions in the subsequent pre-exploration problem, as indicated by the self-loop in the planning block.}
    \label{fig:blockdia_plan}
    \vspace{-3mm}
\end{figure}
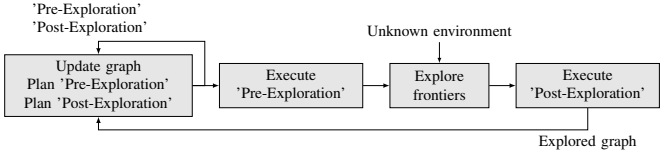

\section{Exploration and Clustering for Scalability}

We now present a way to leverage the framework developed in Section \ref{sec:solution} to explore an unknown environment using a team of mobile agents. We study a case where there is a static base station and a heterogeneous group of agents that should explore a communication-constrained tunnel-like environment. Although the flow approach is capable of solving problems of moderate size, large networks are still challenging since solving ILPs is NP-hard. To mitigate this issue we propose a clustering method that separates a large exploration problem into smaller problems that can be solved independently.

\subsection{Exploration via Intermittent Connectivity}
\label{sec:implementation_section}

The main goal in an exploration scenario is to get agents to \emph{frontiers}, which are states on the boundary between known and unknown space. However, it is also important to share information about the progress between agents and potentially with an external operator. For this reason we require that a static \emph{base station} is periodically updated with the progress.

Since we do not assume continuous connectivity it is necessary to synthesize a plan that gets agents to frontiers, and also a plan that distributes the new information to the base station when frontier exploration is finished. We call these two planning problems \emph{pre-exploration} and \emph{post-exploration}. Fig.  \ref{fig:blockdia_plan} shows how the two problems are solved and executed in a repeating manner to explore an unknown environment. 

In pre-exploration we use Proposition \ref{prop:s2} with a single designated master that computes a global plan. The master could be the base station, or it could be a mobile agent that has sufficient computational capabilities for solving the planning problem. Since the only information that needs to be shared is the plan itself, there are no sources or sinks in this problem instance, i.e., $\texttt{src} = \texttt{snk} = \emptyset$. To promote exploration we assign high rewards to frontiers states. In addition, we include a heuristic reward based on the betweenness-centrality measure of a state \cite{Brandes2001}, which implies that agents get a higher reward by finishing in a state that is more central in the graph. This is useful for efficient data distribution in the subsequent post-exploration step. Executing the resulting plan sends agents to frontier states from which a suitable exploration behavior can be executed. 

For the post-exploration problem we again solve an instance of Problem \ref{prob:p2}, this time with information sources \texttt{src} set to the agents that performed exploration (and thus have new information), and the information sinks $\texttt{src}$ set to the base station and the master agent. It is possible that agents that were idle during pre-exploration are useful in post-exploration. For this purpose, in the post-exploration problem each agent that received the pre-exploration plan is considered a master agent in the post-exploration instance of Problem \ref{prob:p2} (leveraging the multi-master extension discussed in Section \ref{sssec:multiple_master}).

When the master is a static base station it is necessary that it can communicate to a mobile agent for a plan to be distributable. To ensure this property we add an additional constraint to the post-exploration problem that requires at least one dynamic agent to be in communication range with the static master. If we denote the set of dynamic agents by $R_{\textrm{dyn}}$ and the set of states connected to the master, possibly via other static relay agents, by $^m S_{\textrm{static}}$, we add the following constraint to the post-exploration problem to guarantee feasible information distribution in the subsequent pre-exploration problem.
\begin{equation}
    \sum_{r \in R_{\textrm{dyn}}}\sum_{s\in ^m S_{\textrm{static}}} z_{rsT} \geq 1.
\end{equation}

\begin{figure}
    \centering
    \begin{tikzpicture}[scale=0.8]
    	\path[draw, fill=red, fill opacity=0.2, use Hobby shortcut,closed=true] (0,0) .. (.5,1) .. (1,2) .. (.3,3) .. (-1,1) .. (-1,.5);

        \node at (-0.75, 0.4) {$S_1$};

    	\path[draw, fill=yellow, fill opacity=0.2, use Hobby shortcut,closed=true] (1.3, 2) .. (2.5, 1.6) .. (2, 3) .. (1.5, 3.4);

        \node at (1.4, 3.1) {$S_2$};

    	\path[draw, fill=blue, fill opacity=0.2, use Hobby shortcut,closed=true] (1, 1.2) .. (2.5, 0.2) .. (2.3, -0.2);

        \node at (0.95, 0.3) {$S_3$};

    	\path[draw, fill=green!50!black, fill opacity=0.2, use Hobby shortcut,closed=true] (-1.2, 2) .. (-2, 1) .. (-2, 2);

        \node at (-2, 1.3) {$S_4$};

    	\node[draw=black, circle, fill=black] (master) at (-0.3, 0.15) {};
    	\node[draw=black, inner sep=2, circle, fill=red] at (0.2, 1) {};
    	\node[draw=black, inner sep=2, circle, fill=red] at (-0.4, 2) {};
    	\node[draw=black, inner sep=2, circle] at (0.3, 1.5) {};
    	\node[draw=black, inner sep=2, circle] at (0.2, 2.5) {};
    	\node[draw=black, inner sep=2, circle] at (-0.5, 0.7) {};
    
    	\node[draw=black, circle, fill=green!50!black] (subm1) at (-1.3, 1.5) {};
    	\node[draw=black, inner sep=2, circle, fill=green!50!black] at (-1.9, 1.6) {};
    	\node[draw=black, inner sep=2, circle] at (-1.7, 1.1) {};
    
    	\node[draw=black, circle, fill=yellow] (subm2) at (1.3, 2.6) {};
    	\node[draw=black, inner sep=2, circle, fill=yellow] at (2.3, 2) {};
    	\node[draw=black, inner sep=2, circle] at (2, 2.3) {};
    
    	\node[draw=black, circle, fill=blue] (subm3) at (1.65, 1.2) {};
    	\node[draw=black, inner sep=2, circle, fill=blue] (n31) at (1.9, 0.5) {};
    	\node[draw=black, inner sep=2, circle, fill=blue] (n32) at (1.4, 0.2) {};
    	\node[draw=black, inner sep=2, circle] (n33) at (1.7, 0) {};
    	\node[draw=black, inner sep=2, circle] (n34) at (1.2, 0.8) {};
    
    	\draw[latex-, thick, dashed] (subm1) -- ++ (0.5, -0.2) node[anchor=west,  line width=0.5, draw=black, solid, circle, inner sep=2] {};
    
    	\draw[latex-, thick, dashed] (subm2) -- ++ (-0.5, -0.5) node[anchor=north east, line width=0.5,  draw=black, solid, circle, inner sep=2] {};
    
    	\draw[latex-, thick, dashed] (subm3) -- ++ (0.1, 0.5) node[anchor=south,  line width=0.5, draw=black, solid, circle, inner sep=2] {};
    \end{tikzpicture}
    \vspace{-7mm}
    \caption{Illustration of clustering for exploration. States are depicted with circles, and occupied states are filled. The master agent is marked with a black circle. The master cluster $S_1$ has two child clusters $S_2$ and $S_4$, and $S_3$ is the child of $S_2$. Each child cluster has a designated submaster agent (shown as a larger circle) whose state is connected to a state in the parent cluster via a communication edge.
    }
    \label{fig:clustering}
    \vspace{-1mm}
\end{figure}
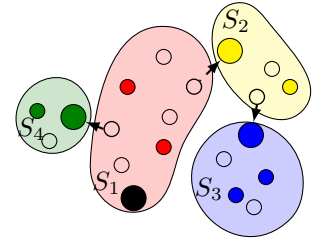

\subsection{Clustering for Improved Scalability}
\label{sec:clustering}

We next exploit the inherent structure of the exploration setting to improve scalability. We remark that agents that are far away from each other in the network are unlikely to directly interact, and therefore aim to divide the network into multiple sub-networks, or \emph{clusters}, and constrain the movement of each agent to a single cluster. We exploit the hierarchical structure of the problem to design an information sharing scheme that passes information between clusters.

\begin{definition}
\label{def:clustering}
    A \emph{clustering} of a mobility-communication network $\EuScript N = (S, \rightarrow, \rightsquigarrow)$ with agent positions $\{s_0^r\}$ and master $m$ is a partition of $S$ into connected (via mobility edges) subsets $S_1, \ldots, S_k$ such that the following holds:
    \begin{itemize}
         \item $s_0^m \in S_1$ and we call $S_1$ the \emph{master cluster},
         \item Each cluster $S_2, \ldots, S_k$ has a designated \emph{parent cluster},
         \item Each cluster $S_i$ except the master cluster has a \emph{submaster}, which is an agent $r$ s.t. $s_0^r \in S_i$ s.t. there is a communication edge from a state in the parent cluster to $s_0^r$.
     \end{itemize} 
\end{definition}

An illustration of the clustering concept in Definition \ref{def:clustering} is shown in Fig. \ref{fig:clustering}. To arrive at a clustering that satisfies these requirements we start by grouping agents together via \emph{spectral clustering} \cite{Shi2000},\cite{Bonaccorso:2017}, which takes a similarity matrix $A$ and desired number of clusters $K$ as inputs and categorizes the agents into clusters by performing $K$-means clustering based on the spectrum of $A$. The similarity matrix $A$ is constructed by defining the similarity $A_{i,j}$ of two agents $i$ and $j$ as the inverse of the shortest weighted distance between $i$ and $j$ using only mobility edges and the transition costs as weights.

Once agents have been grouped together into \emph{agent clusters}, the next step is to find corresponding subgraphs, or \emph{state clusters}. Naturally, if agent $r$ belongs to agent cluster $k$, then $s_0(r)$ should belong to the $k$:th state cluster. Algorithm \ref{alg:clustrings_algorithm} starts by assigning these initial states to clusters, and then iteratively expands the state clusters in a way that guarantees that the conditions in Definition \ref{def:clustering} are satisfied. It starts with only the master cluster being \emph{active}, and at each iteration finds a free (i.e., not assigned to a cluster) state $s$ that is at the minimal distance to any agent in an active cluster $S$, and assigns $s$ to $S$. If $s$ has a communication edge to an agent $r$ in an inactive cluster $S'$, then $S'$ is activated as a child of $S$ and $r$ becomes the submaster of $S'$. In case the algorithm results in a non-connected state cluster, the corresponding agent cluster is split and the algorithm restarted.

\subsection{Exploration Planning on a Clustering}

We next discuss how to plan for exploration as in Fig. \ref{fig:blockdia_plan} by solving intermittent connectivity problems locally in each cluster. For the pre-exploration plan we solve an instance of Problem \ref{prob:p2} for each cluster in a reverse breath-first manner: we first solve for clusters with no children (leaf clusters), proceed by solving for clusters that only have leaf clusters as children, and so on, until the master cluster is reached.

Consider the problem instance for a cluster $S_k$. The submaster of $S_k$ is considered the master agent in the problem instance, and submasters of child clusters of $S_k$ are included as static agents so that information can be transferred to from $S_k$ to its child clusters. The reward for passing information to a submaster is set to the optimal value of the optimization problem in the corresponding child cluster, so that it becomes beneficial to pass the plan to high-value child clusters. If a cluster $S_k$ does not receive plan information from its parent, agents in $S_k$ become inactive and not part of the overall plan.

For the post-exploration plan we require that every cluster that received the pre-exploration plan should send information back to the submaster. Both information from frontiers, as well as information from child clusters, should be passed to the parent cluster. To achieve this we solve an instance of Problem \ref{prob:p2} where each agent that received the pre-exploration plan is considered a master, \texttt{src} contains all agents that reached frontiers and all submasters of child clusters, and \texttt{src} contains the cluster submaster. This forces every agent that may have new information---either from a frontier or from a child cluster---to transmit that information to the cluster submaster. Since this is repeated in every cluster new information from frontiers eventually reaches the global master.

\SetAlgoLined
\LinesNumbered
\begin{algorithm}[t]
    \caption{Agent clustering to state clustering.}
    \label{alg:clustrings_algorithm}
    \KwResult{\begin{itemize}
        \item clusters: map from id to set of vertices
        \end{itemize}}
    \KwIn{\begin{itemize}
        \item $\EuScript{N}$: mobility-communication network
        \item agent\_clusters: map from cluster id to agent cluster
        \item activate($\EuScript{N}, s$): id:s of clusters that $s$ can activate
        \item closest\_free\_state($\EuScript{N}$, clusters): state that is at minimal distance to any agent in a cluster id$\in \text{clusters}$
        \end{itemize}}
    $\text{active\_clusters} = \{\text{master\_cluster}\}$\;
    $\text{clusters[id]} = \{ s_0(r) : r \in \text{agent\_clusters[id]} \}$\;
    $\text{free\_states} = S \setminus \left( \bigcup_k \text{clusters[k]} \right)$\;    
    \While{\normalfont free\_states not empty}{
        $s, \text{id} = \text{closest\_free\_state}(\EuScript{N}, \text{active\_clusters})$\\
        $\text{clusters[id].add}(s)$\\
        $\text{free\_states.remove}(s)$\\
        \For{$\text{new\_id} \in \text{\normalfont activate}(\EuScript{N}, s) \setminus \text{active\_clusters}$}{
            $\text{active\_clusters.add}(\text{new\_id})$\;
        }
    }
\end{algorithm}

\begin{figure*}
    \centering
    \footnotesize
    \begin{tikzpicture}
        \node at (0\textwidth,0) {\includegraphics[width=0.19\textwidth,trim={3cm 3cm 3cm 4cm},clip]{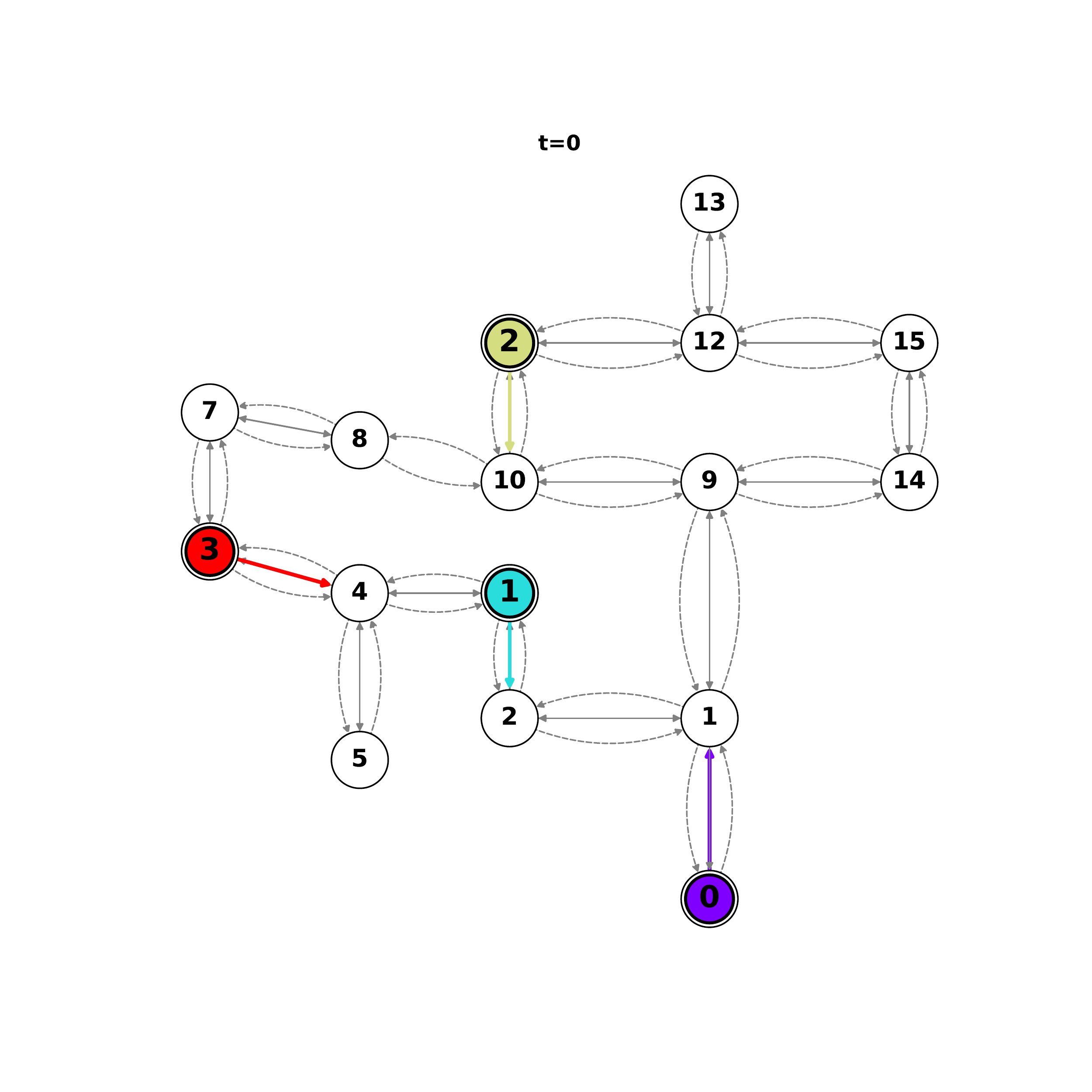}};
        \node at (-0.03\textwidth, -1.3) {$t=0$};

        \node at (0.2\textwidth, 0) {\includegraphics[width=0.19\textwidth,trim={3cm 3cm 3cm 4cm},clip]{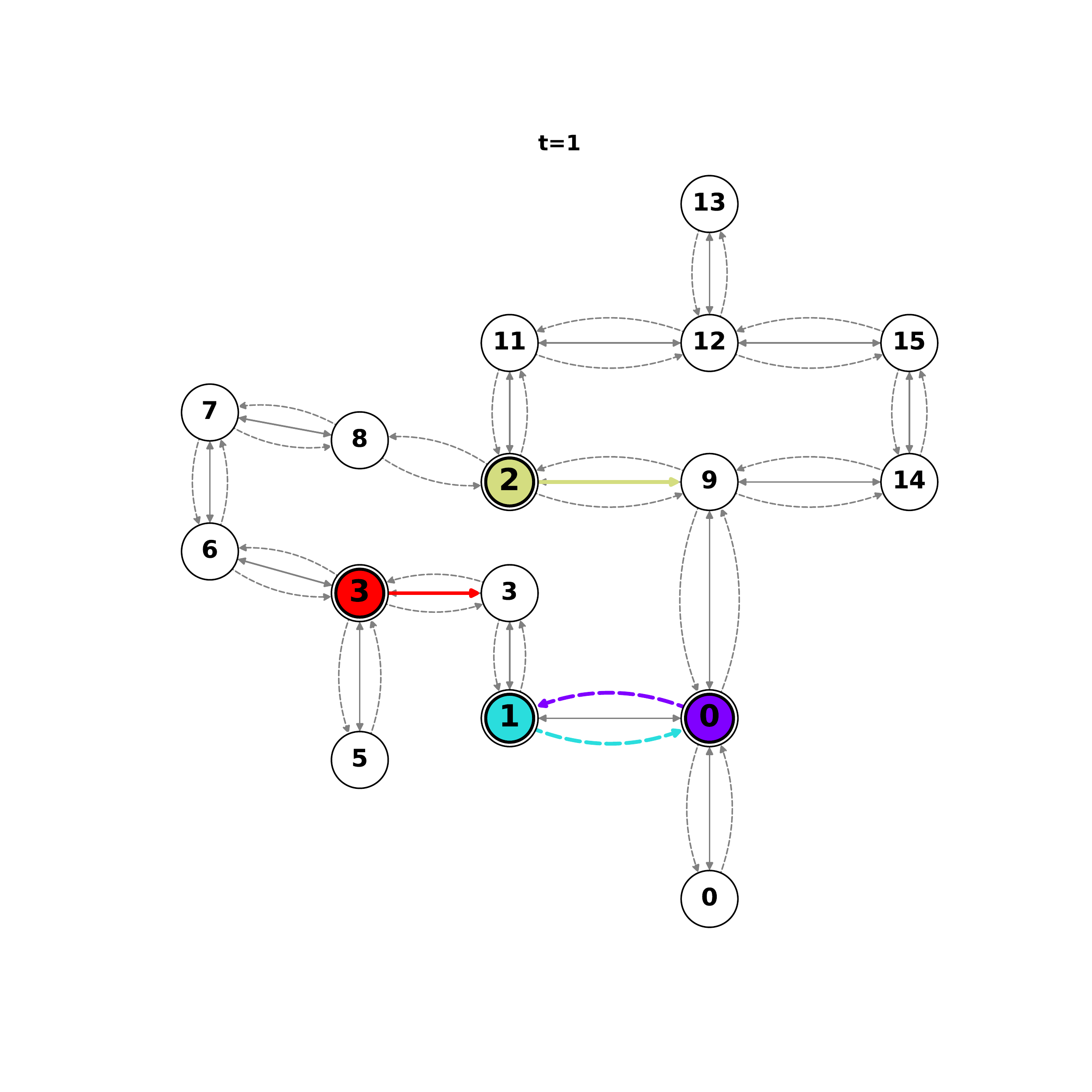}};
        \node at (0.17\textwidth, -1.3) {$t=1$};

        \node at ([yshift=-0.2\textwidth]0\textwidth, 0) {\includegraphics[width=0.19\textwidth,trim={3cm 3cm 3cm 4cm},clip]{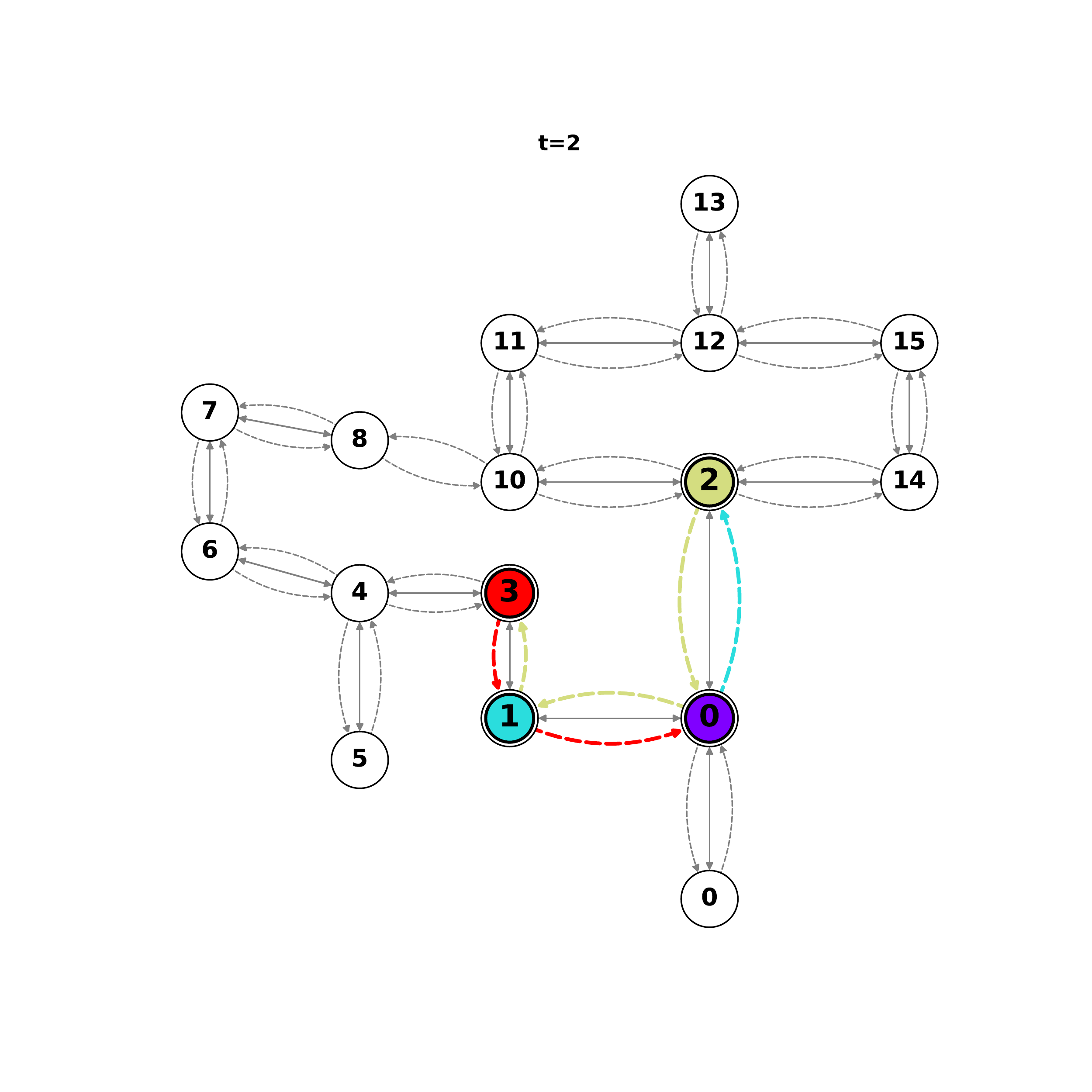}};
        \node at ([yshift=-0.2\textwidth]-0.03\textwidth, -1.3) {$t=2$};

        \node at (0.1\textwidth, -0.3\textwidth) {(a)};

        \draw[dashed] (0.3\textwidth, 1.8) -- (0.3\textwidth, -5.4);

        \node at (0.4\textwidth, 0) {\includegraphics[width=0.19\textwidth,trim={3cm 3cm 3cm 4cm},clip]{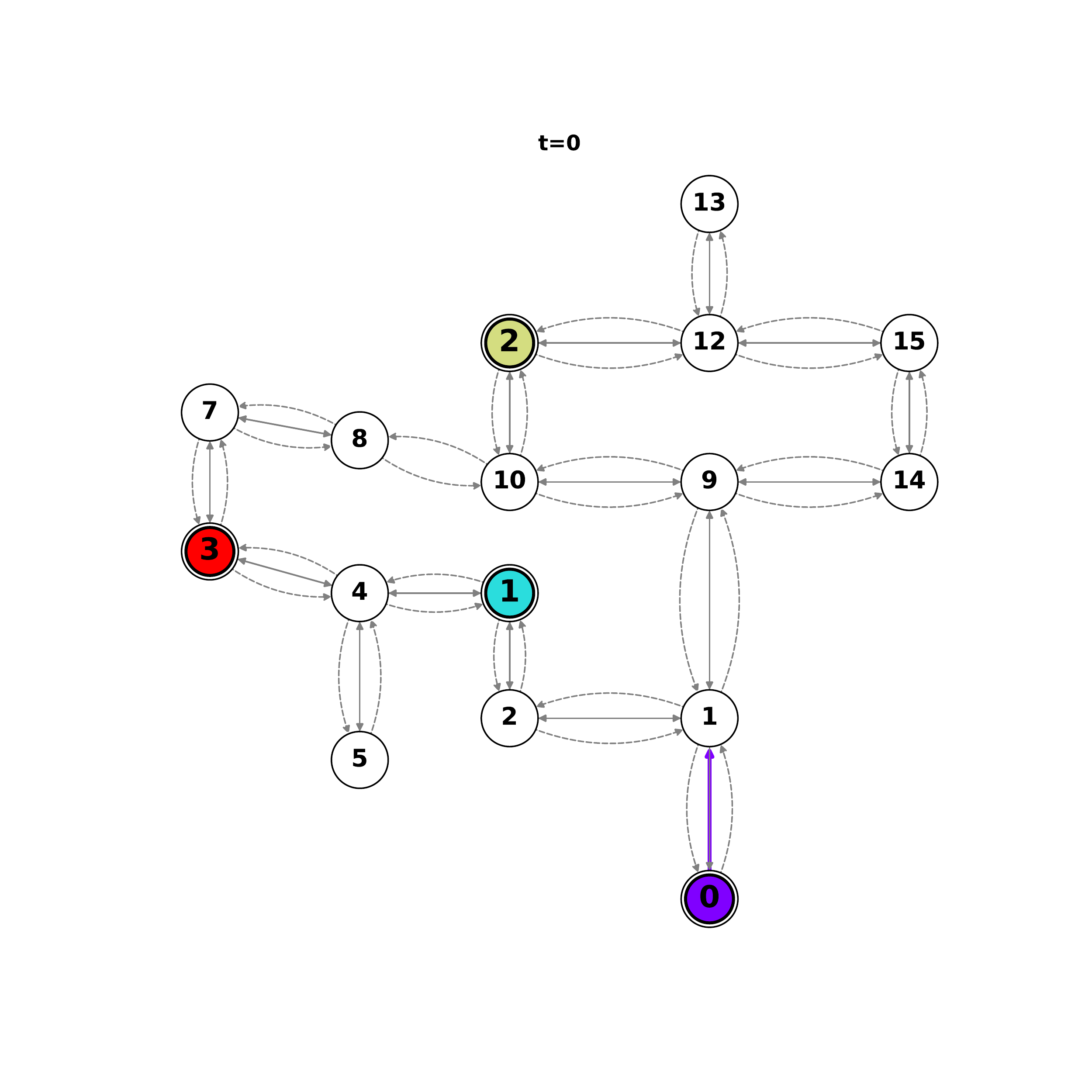}};
        \node at (0.37\textwidth, -1.3) {$t=0$};

        \node at (0.6\textwidth, 0) {\includegraphics[width=0.19\textwidth,trim={3cm 3cm 3cm 4cm},clip]{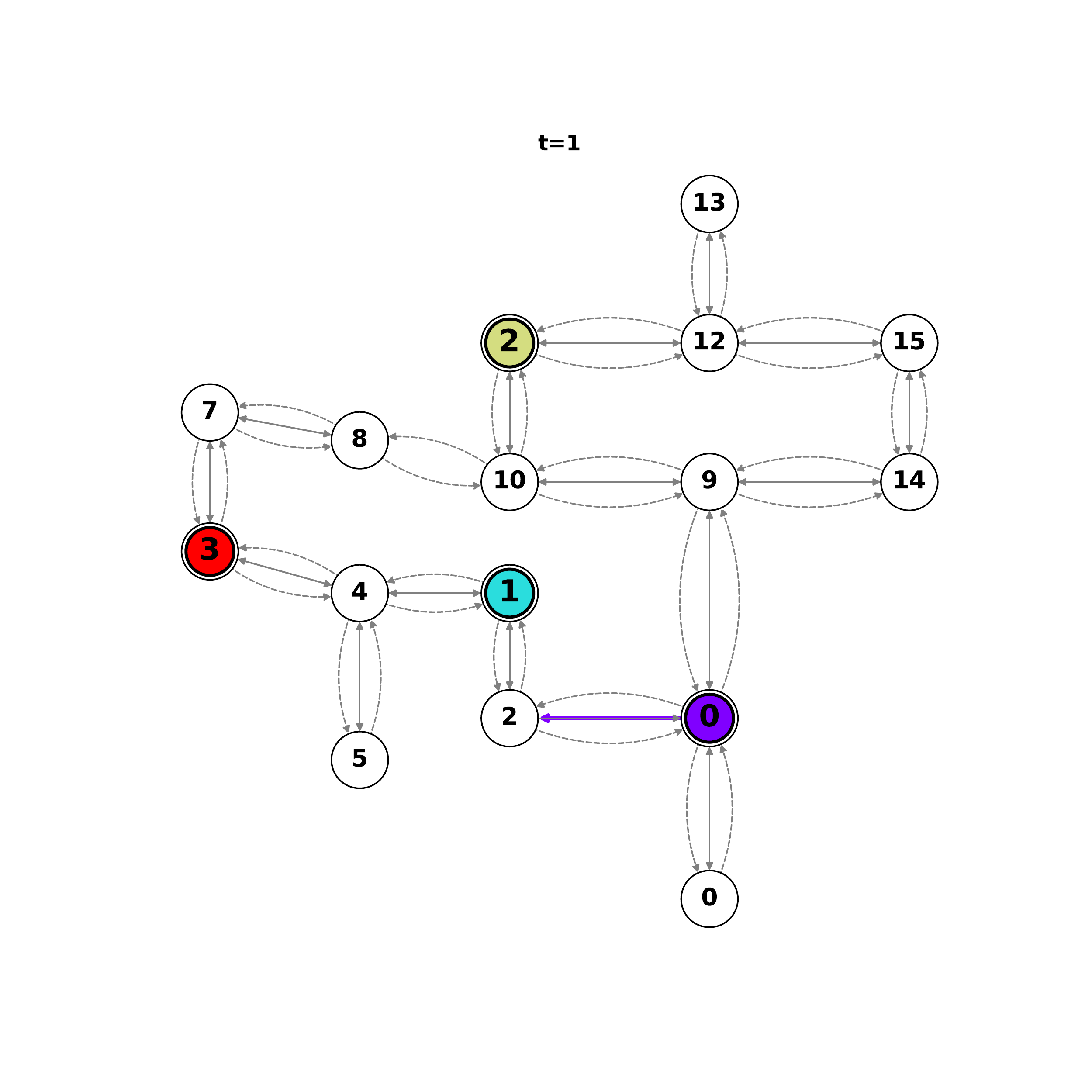}};
        \node at (0.57\textwidth, -1.3) {$t=1$};

        \node at (0.8\textwidth, 0) {\includegraphics[width=0.19\textwidth,trim={3cm 3cm 3cm 4cm},clip]{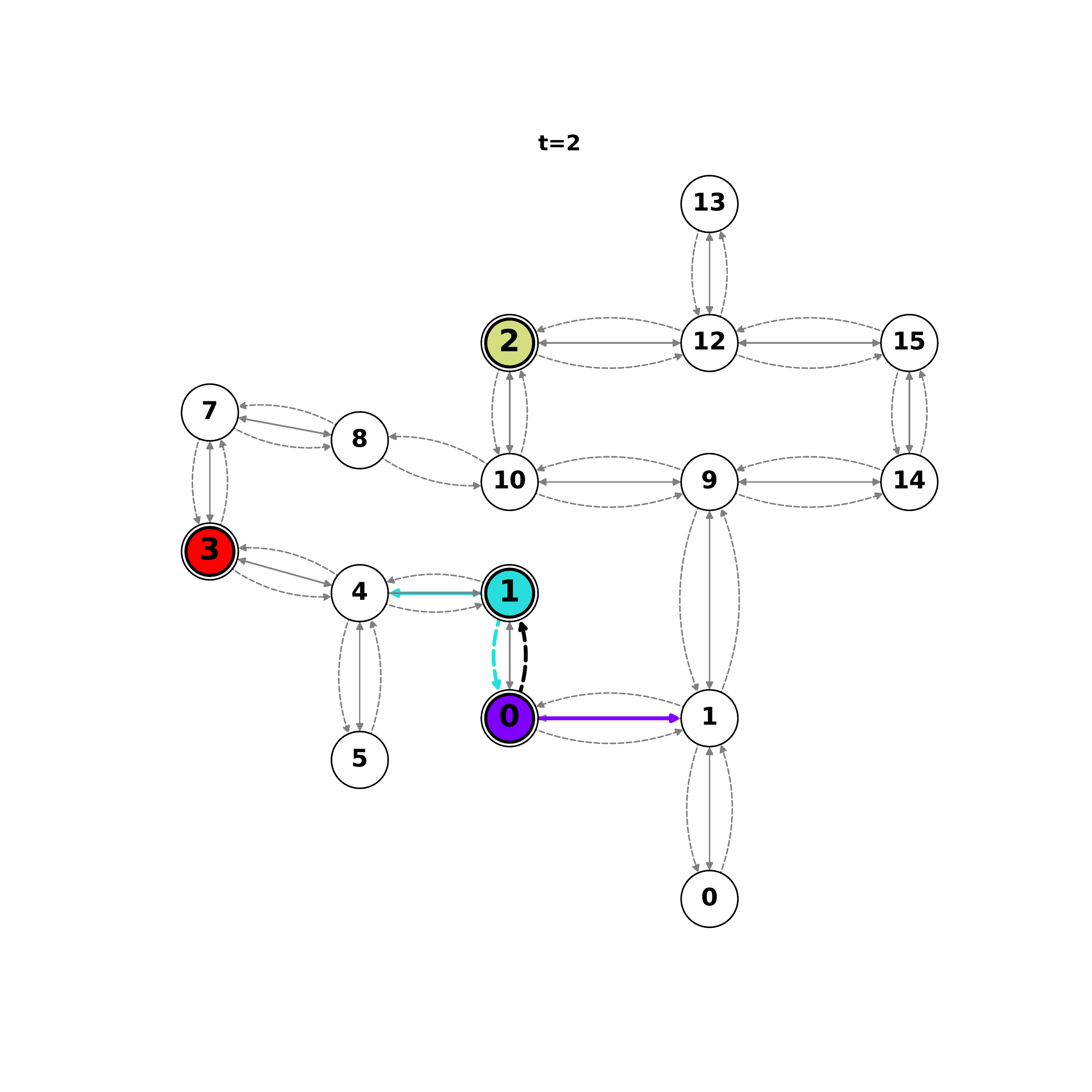}};
        \node at (0.77\textwidth, -1.3) {$t=2$};

        \node at ([yshift=-0.2\textwidth]0.4\textwidth, 0) {\includegraphics[width=0.19\textwidth,trim={3cm 3cm 3cm 4cm},clip]{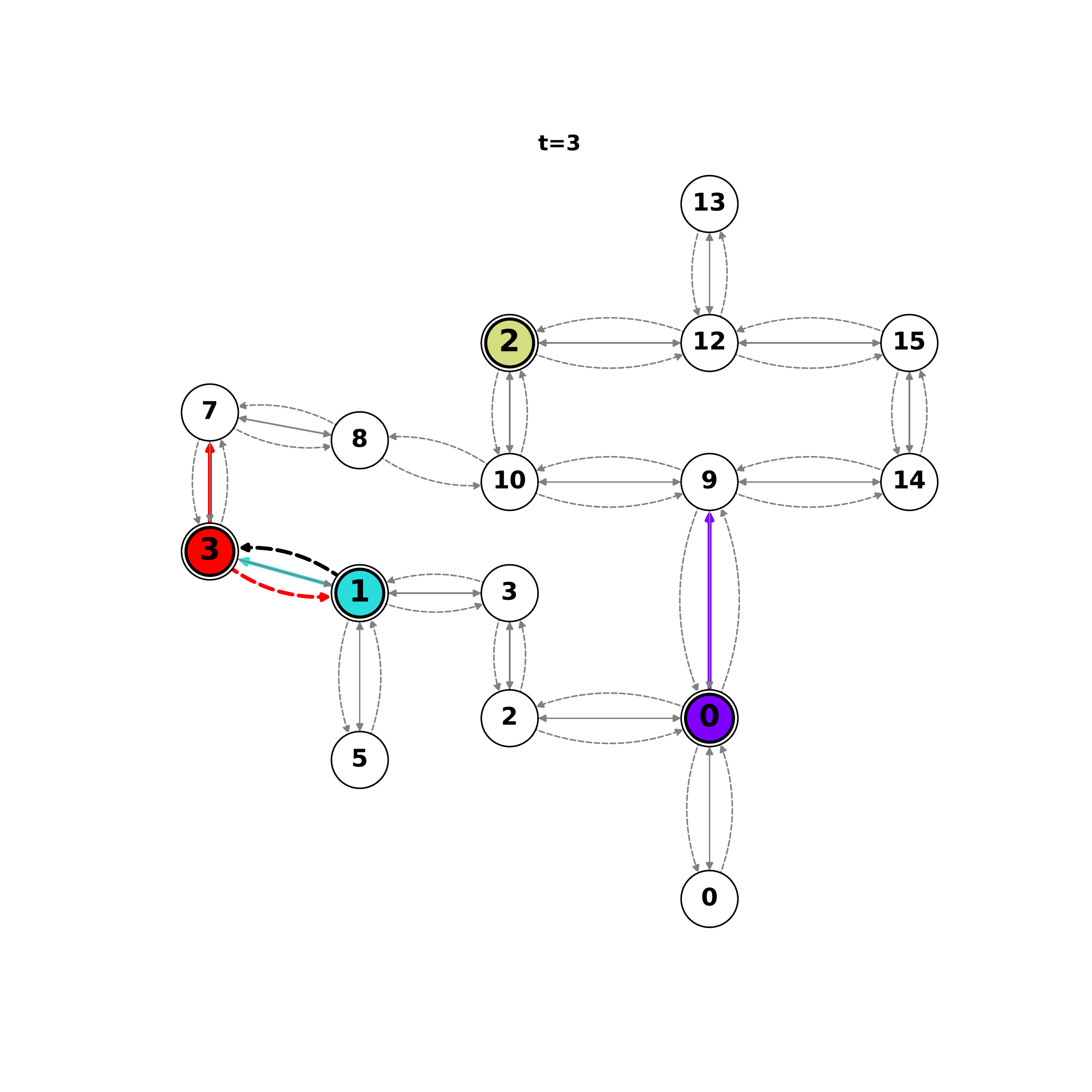}};
        \node at ([yshift=-0.2\textwidth]0.37\textwidth, -1.3) {$t=3$};

        \node at ([yshift=-0.2\textwidth]0.6\textwidth, 0) {\includegraphics[width=0.19\textwidth,trim={3cm 3cm 3cm 4cm},clip]{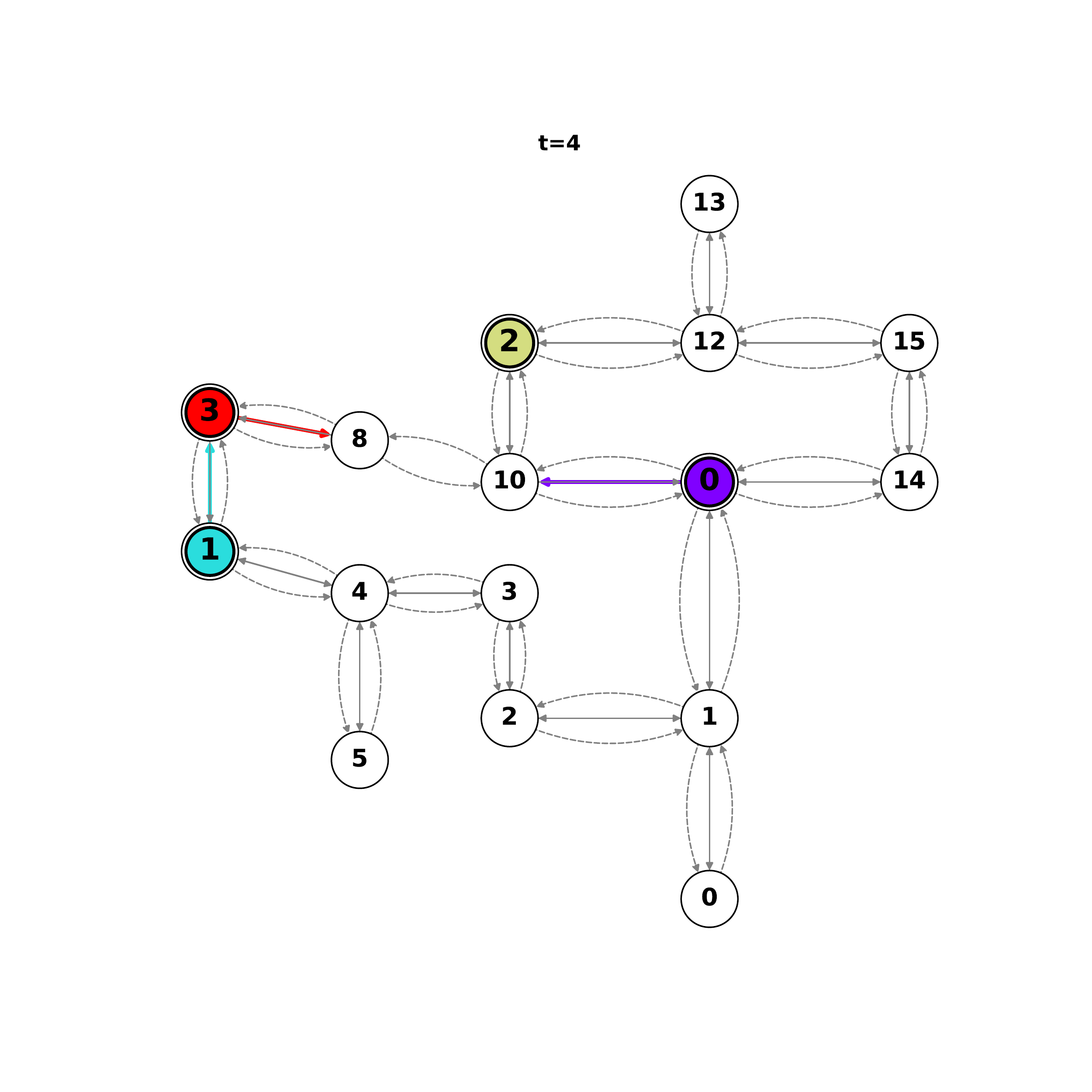}};
        \node at ([yshift=-0.2\textwidth]0.57\textwidth, -1.3) {$t=4$};

        \node at ([yshift=-0.2\textwidth]0.8\textwidth, 0) {\includegraphics[width=0.19\textwidth,trim={3cm 3cm 3cm 4cm},clip]{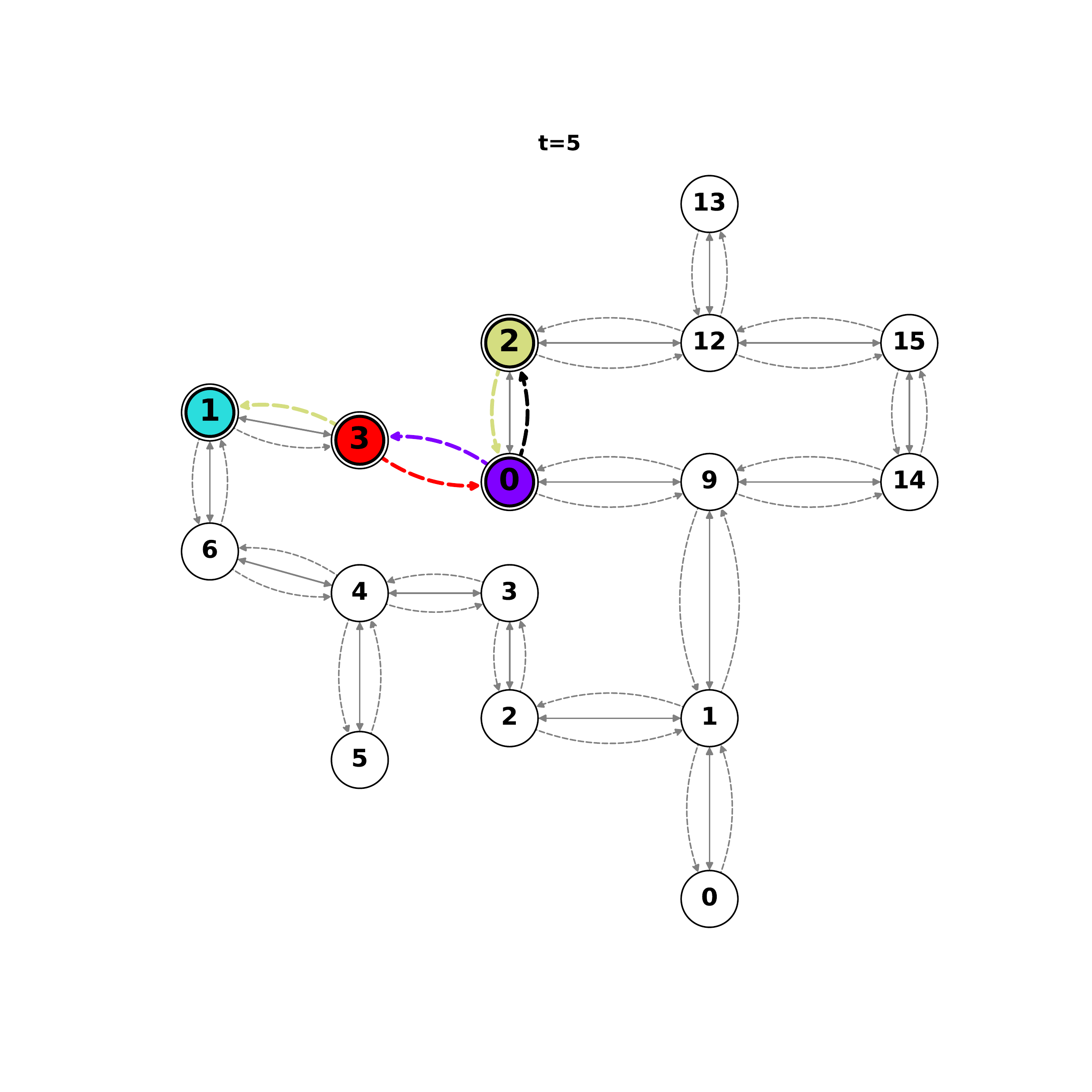}};
        \node at ([yshift=-0.2\textwidth]0.77\textwidth, -1.3) {$t=5$};
        
        \node at (0.6\textwidth, -0.3\textwidth) {(b)};
        
    \end{tikzpicture}    
    \vspace{-5mm}
    \caption{Simulation of intermittent connectivity with (a) and without (b) master constraints. All agents are required to share information with each other.}
    \label{fig:without_master}
    \vspace{-2mm}
\end{figure*}

\section{Results}
\label{sec:results}

We next show how the results in this paper can be used to plan for exploration. We start with a numerical example that exhibits the difference between intermittent connectivity, and information-consistent intermittent connecivity, and then show numerical results of the improved scalability of our flow-based constraints for information sharing. Finally, we show how the clustering method can be used to plan for exploration in very large environments. All the results in this section have been computed with our Python toolbox COPS\footnote{COPS is available at \url{https://github.com/FilipKlaesson/cops}.}, using Gurobi \cite{gurobi} as the underlying ILP solver. The computations are performed on a laptop with an Intel Core i7-8850H processor. 


\subsection{Information-Consistency in Intermittent Connectivity}
\label{sec:simulations}

We first demonstrate solutions to the intermittent connectivity problems synthesized by solving the ILPs in Proposition \ref{prop:s1} and \ref{prop:s2}. Fig. \ref{fig:without_master}a shows a solution of Problem \ref{prob:p1} where all four agents are required to share information with each other, i.e. $\texttt{src} = \texttt{snk} = {0,\ldots,3}$. As can be seen from the figures, all agents are moving and sharing information as expected and are therefore the problem only requires $T=2$. However, since all four agents move in the first time step this plan is contingent upon all agents knowing the plan at time $t=0$.

We next solve Problem \ref{prob:p2} to demonstrate what information-consistency implies for the same problem. The purple agent 0 is the master agent and again, we require all 4 agents to share information with each other. The resulting plan is illustrated in Fig. \ref{fig:without_master}b. Naturally, the additional information-consistency requirement implies that more time steps are needed to perform the information sharing task. We see that agents do not perform any tasks before master information (black communication edges) is received, which fulfills the requirement of information consistency.

\begin{figure}[!b]
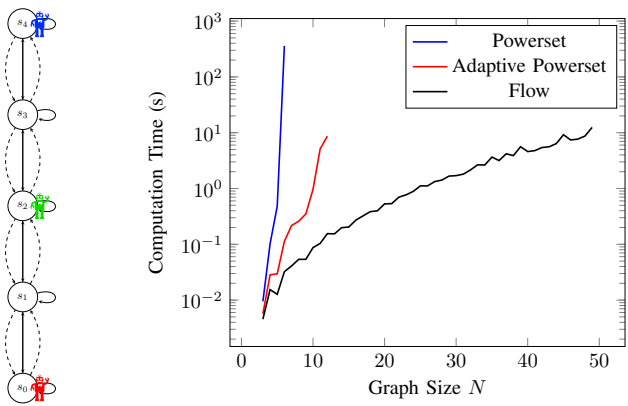

    \vspace{-2mm}
    \centering
    \resizebox{!}{0.6\columnwidth}{%
        \begin{tikzpicture}[>=stealth,node distance=2.7cm,auto]
        \tikzstyle{agent}=[circle,thick,draw=blue!75,fill=blue!20,minimum size=8mm]

        \node[state]                    (s0)                    {$s_0$};
        \node[state]                    (s1) [above of = s0]      {$s_1$};
        \node[state]                    (s2) [above of = s1]       {$s_2$};
        \node[state]                    (s3) [above of = s2]      {$s_3$};
        \node[state]                    (s4) [above of = s3]      {$s_4$};
                 
        \draw[dashed,->] (s0) to [out=60,in=-60] (s1);
        \draw[dashed,->] (s1) to [out=60,in=-60] (s2);
        \draw[dashed,->] (s2) to [out=60,in=-60] (s3);
        \draw[dashed,->] (s3) to [out=60,in=-60] (s4);
        \draw[dashed,<-] (s0) to [out=120,in=-120] (s1);
        \draw[dashed,<-] (s1) to [out=120,in=-120] (s2);
        \draw[dashed,<-] (s2) to [out=120,in=-120] (s3);
        \draw[dashed,<-] (s3) to [out=120,in=-120] (s4);
        
        \path[->]
            (s0) edge   node {}     (s1)
            (s1) edge   node {}     (s0)
            (s1) edge   node {}     (s2)
            (s2) edge   node {}     (s1)
            (s2) edge   node {}     (s3)
            (s3) edge   node {}     (s2)
            (s3) edge   node {}     (s4)
            (s4) edge   node {}     (s3)
            (s0) edge [loop right] node {} (s0)
            (s1) edge [loop right] node {} (s1)
            (s2) edge [loop right] node {} (s2)
            (s3) edge [loop right] node {} (s3)
            (s4) edge [loop right] node {} (s4);
            
        \node at ([xshift=5mm]s0) {\includegraphics[width=0.8cm]{figures/red.png}};
        \node at ([xshift=5mm]s2){\includegraphics[width=0.8cm]{figures/green.png}};
        \node at ([xshift=5mm]s4){\includegraphics[width=0.8cm]{figures/blue.png}};

        \end{tikzpicture}
    }
    ~ \hspace{5mm}
    \resizebox{!}{0.6\columnwidth}{%
        \begin{tikzpicture}
        \begin{semilogyaxis}[xlabel={Graph Size $N$}, ylabel={Computation Time (s)}]
        \addplot[color=blue, thick] table [x=x, y=y, col sep=comma] {data/powerset.csv};
        \addlegendentry{Powerset}
        \addplot[color=red, thick] table [x=x, y=y, col sep=comma] {data/adaptive.csv};
        \addlegendentry{Adaptive Powerset}
        \addplot[color=black, thick] table [x=x, y=y, col sep=comma] {data/flow.csv};
        \addlegendentry{Flow}
        \end{semilogyaxis}
        \end{tikzpicture}
    }
    \vspace{-2mm}
\caption{Computation time of the intermittent connectivity problem as a function of graph size N with agents in state $0$, $\left \lceil \frac{N}{2} \right \rceil$ and $N-1$, and all agents are required to share information with each other.}
\label{fig:scalability}
\end{figure}

\subsection{Scalability}
\label{sec:scalability}

To demonstrate the improved scalability of flow information sharing constraints compared to previous work we compute optimal solutions for a linear graph of length $N$. In these experiments there are three agents positioned in $0$, $\left \lceil \frac{N}{2} \right \rceil$ and $N-1$, with the constraint that all agents are required to share information with each other. We compare the performance with two other methods. In \cite{Banfi2018}, a configuration with recurrent connectivity constraint is planned by looking at the powerset $2^S$ of the vertices in the network and requiring that if an agent is in a set $A \in 2^S$, then the solution has to use at least one communication edge into $A$. We extended the powerset idea to plan for intermittent connectivity over time by including dynamics and enforce constraints as in \cite{Banfi2018} on the time-extended graph. The powerset of the time extended graph scales poorly: there is a copy of the original graph for each time step and hence $2^{(T+1) |S|}$ different subsets.

We also consider an adaptive version of the powerset method that first solves the optimization problem without any communication constraints, and, in case a communication constraint is violated, adds the corresponding subset constraint to the problem and solves it again. Thus, constraints corresponding to additional subsets are added until a feasible solution is found or the problem becomes infeasible.

Computational times for solving problems on a linear graph are presented in Fig. \ref{fig:scalability} for the three methods. The adaptive powerset method outperforms the original powerset method, however both are unable to solve the problem for moderately sized graphs which limits their practical applicability. The flow method scales better and is practical for graphs of moderate size, which is expected since the number of constraints is much lower in the flow formulation: the number scales linearly with the number of edges in the network, instead of combinatorially with graph size. The relationship between the flow approach and the powerset approach is similar to that between the max-flow and min-cut problems. Although the two are dual \cite{Ahuja1993}, flows are parameterized by a linear number of variables, whereas there are combinatorially many cuts.

\begin{figure*}[t]
    \hspace*{-0.8cm}
    \begin{tikzpicture}
        \node at (0\textwidth, 0) {\includegraphics[width=0.3\textwidth,trim={3cm 3cm 3cm 4.2cm},clip]{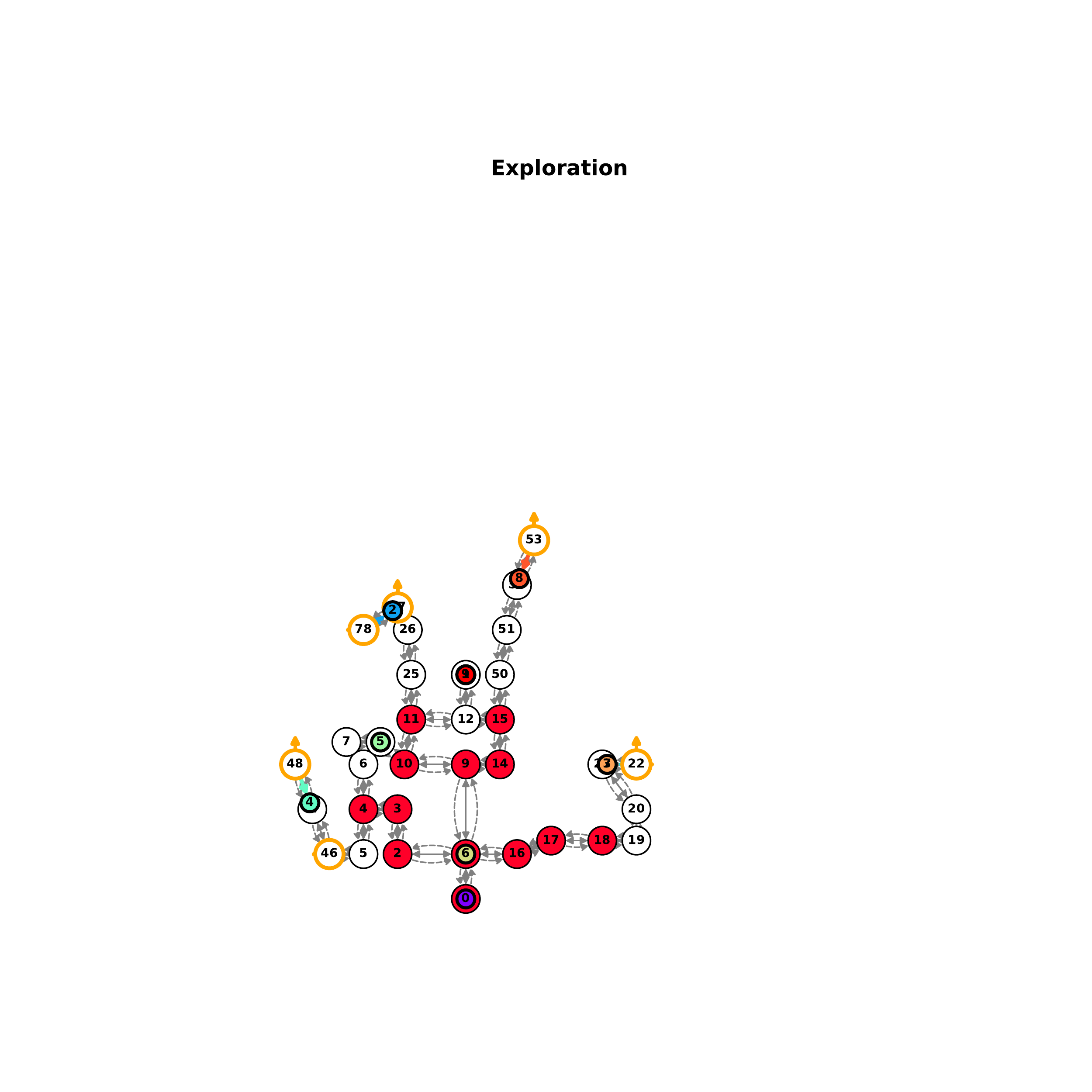}};
        \node at (0.04\textwidth, 2.1) {$t=20$};

        \node at (0.25\textwidth, 0) {\includegraphics[width=0.3\textwidth,trim={3cm 3cm 3cm 4.2cm},clip]{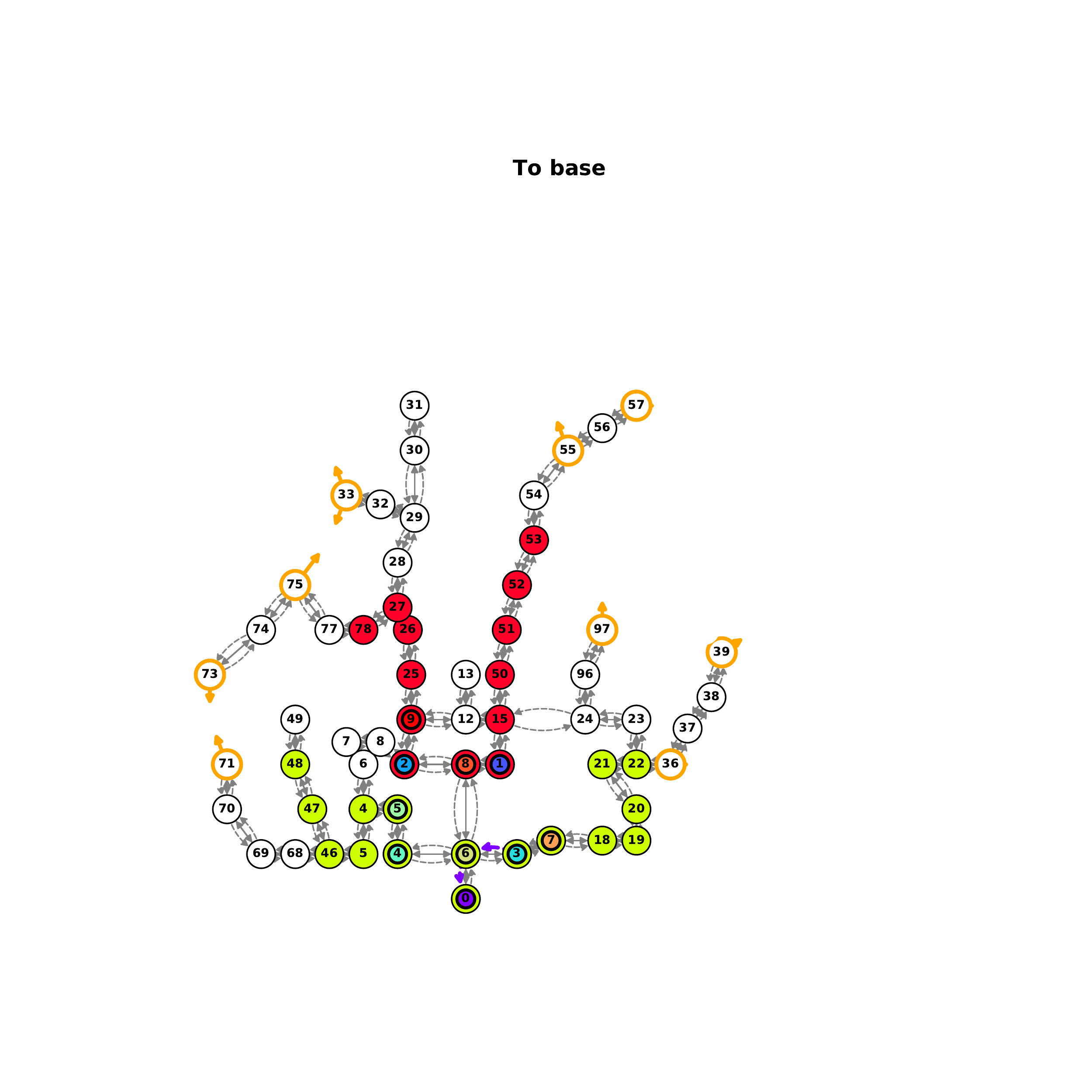}};
        \node at (0.29\textwidth, 2.1) {$t=50$};

        \node at (0.50\textwidth, 0) {\includegraphics[width=0.3\textwidth,trim={3cm 3cm 3cm 4.2cm},clip]{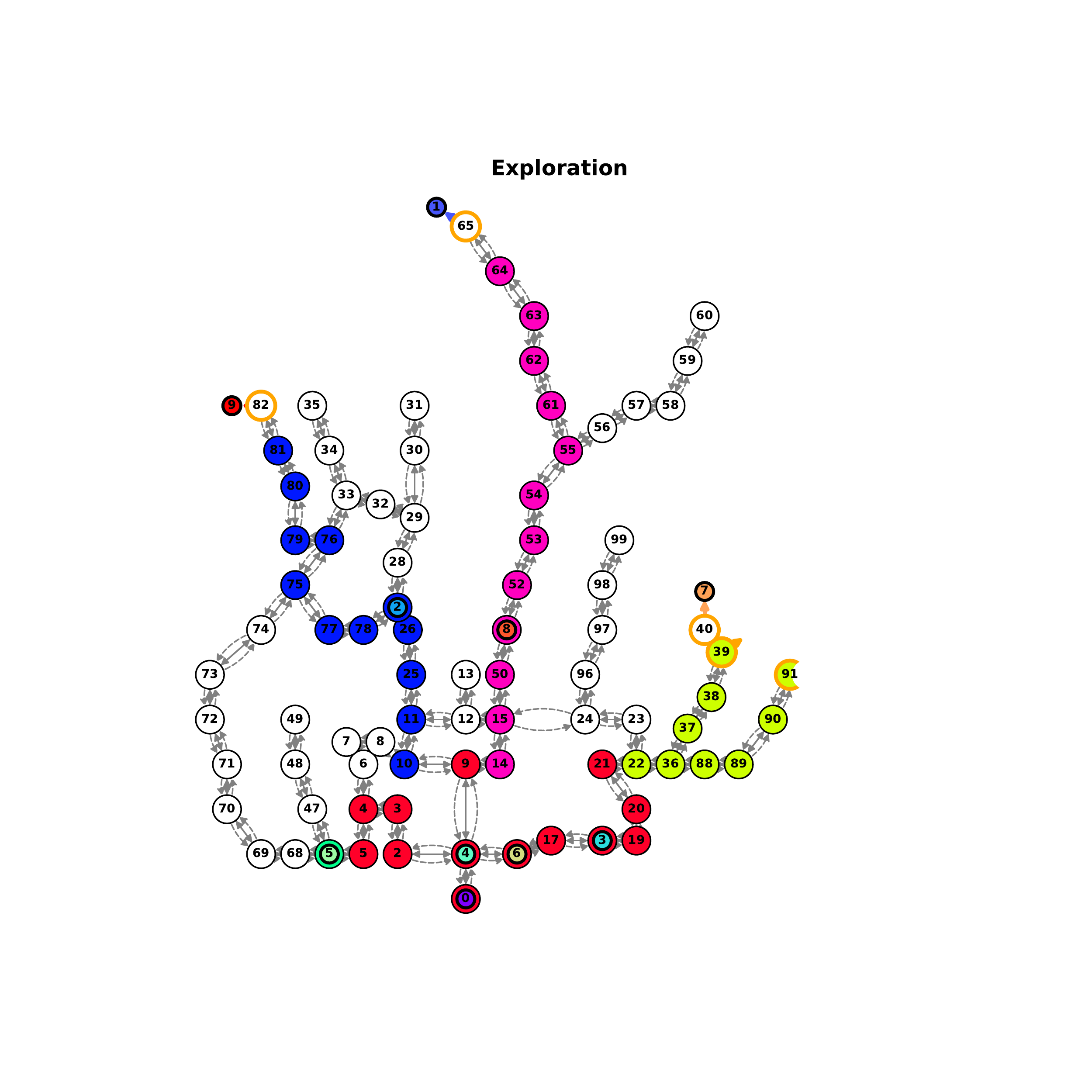}};
        \node at (0.54\textwidth, 2.1) {$t=100$};

        \node at (0.75\textwidth, 0) {\includegraphics[width=0.3\textwidth,trim={3cm 3cm 3cm 4.1cm},clip]{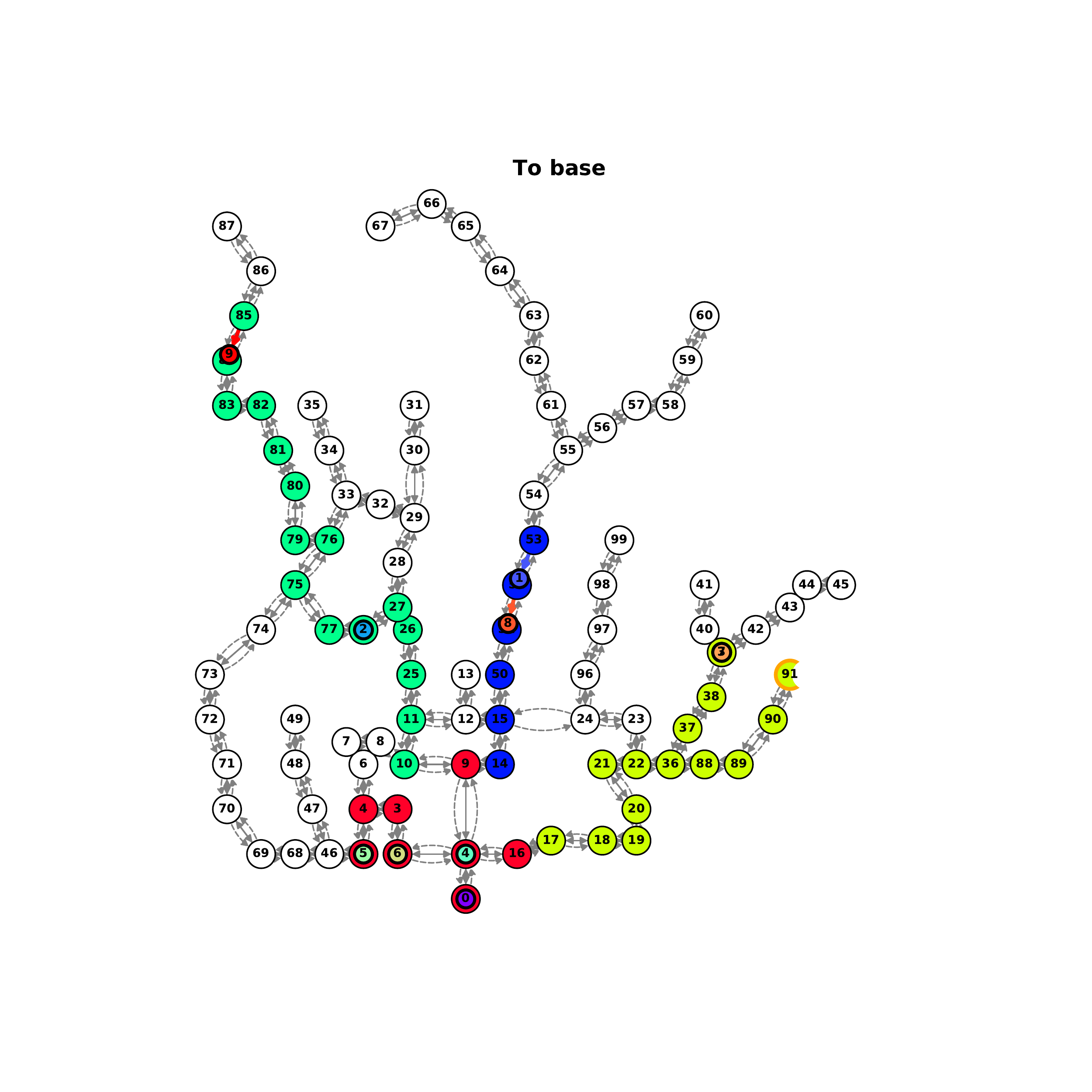}};
        \node at (0.79\textwidth, 2.1) {$t=150$};

    \end{tikzpicture}    
    \vspace{-9mm}
    \caption{Simulation of intermittent connectivity constraint in an exploration setting using 10 agents in a network with 100 states. The color of the states indicates the cluster they belong to. Agent 0 in state 0 is a static base station acting as master and is systematically updated about the progress between each exploration cycle.}
    \label{fig:huge_exploration}
    \vspace{-1mm}
\end{figure*}

\subsection{Exploration via Clustering}

The clustering method presented in Section \ref{sec:clustering} allows us to solve much larger problems than what is possible with a single intermittent connectivity problem. In addition, further scalability improvements are possible via heuristics that disregard states that are unlikely to be used (dead states), and evacuating agents in clusters that have no child cluster or frontiers (dead clusters). As an example we consider exploration via the strategy in Fig. \ref{fig:blockdia_plan} of a network with 100 states, using 10 agents including a static base station as master, with initially just one state known. Exploration of the whole environment required computing and executing nine pre-exploration and post-exploration plans, resulting in a total of 48 solved intermittent connectivity problems. The computation time in any cluster takes no more than 1.82 seconds which is negligible in networks of this size where real-world execution of a single plan can take several minutes.

The time evolution of the exploration can be seen in Fig. \ref{fig:huge_exploration} that shows how the known set of states is expanded\footnote{Full simulation available at \url{https://youtube.com/watch?v=VQjEOFJNRjk}.}.
The color of a state indicates the cluster the state belong to. States that are not used in the optimization problem (dead states) are shown as white. Agents are well distributed in the network and simultaneously explore different parts of the network.

\section{Conclusion}
\label{sec:conclusion}

In this article we studied intermittent connectivity planning in multi-robot systems. Motivated by information constraints in applications, we proposed information-consistent intermittent connectivity to handle situations where the plan itself is part of the information that should be distributed. As a solution to this problem we presented an Integer Linear Program based on ideas from the literature on network flows, and showed that it scales better than established methods. In addition, we proposed novel ``master constraints'' that ensure that the resulting plan is information-consistent.

Motivated by utilizing intermittent connectivity for exploration, we presented an exploration strategy based on repeated solutions of information-consistent intermittent connectivity problems. To further improve scalability of this strategy we proposed a clustering method that decomposes a large intermittent connectivity problem for exploration into smaller problems that can be solved independently, and demonstrated in simulation how the resulting algorithm is capable of efficiently exploring a 100-node network using a team of 10 robots.

Current work includes real-world implementation of the exploration algorithm for use in the DARPA Subterranean Challenge by NASA Jet Propulsion Laboratory. 









\addtolength{\textheight}{-14cm}   

\bibliographystyle{IEEEtran}
\bibliography{IEEEabrv,refs}

\end{document}